\crefname{figure}{Figure}{Figures}
\definecolor{LightCyan}{rgb}{0.88,1,1}
\newtheorem{theorem}{Theorem}
\numberwithin{equation}{section}
\newtheorem{prop}{Proposition}
\newtheorem{lem}{Lemma}
\newtheorem{exmp}{Example}
\begin{document}

\title{Max-Linear Regression by Convex Programming}

\author{Seonho Kim,~\IEEEmembership{Student Member,~IEEE,}
        Sohail Bahmani, 
        and~Kiryung Lee,~\IEEEmembership{Senior Member,~IEEE}
\thanks{S.K. and K.L. are with the Department of Electrical and
Computer Engineering, The Ohio State University, Columbus, OH 43220 USA (e-mail: kim.7604@osu.edu; kiryung@ece.osu.edu).}
\thanks{S.B. is with the School of Electrical and Computer Engineering, Georgia Institute of Technology, Atlanta, GA 30332 USA (e-mail:sohail.bahmani@ece.gatech.edu).}}


\maketitle

\begin{abstract}
We consider the multivariate max-linear regression problem where the model parameters $\boldsymbol{\beta}_{1},\dotsc,\boldsymbol{\beta}_{k}\in\mathbb{R}^{p}$ need to be estimated from $n$ independent samples of the (noisy) observations $y = \max_{1\leq j \leq k} \boldsymbol{\beta}_{j}^{\mathsf{T}} \boldsymbol{x} + \mathrm{noise}$. The max-linear model vastly generalizes the conventional linear model, and it can approximate any convex function to an arbitrary accuracy when the number of linear models $k$ is large enough. However, the inherent nonlinearity of the max-linear model renders the estimation of the regression parameters computationally challenging. Particularly, no estimator based on convex programming is known in the literature. We formulate and analyze a scalable convex program given by anchored regression (AR) as the estimator for the max-linear regression problem. Under the standard Gaussian observation setting, we present a non-asymptotic performance guarantee showing that the convex program recovers the parameters with high probability. When the $k$ linear components are equally likely to achieve the maximum, our result shows a sufficient number of noise-free observations for exact recovery scales as {$k^{4}p$} up to a logarithmic factor. { This sample complexity coincides with that by alternating minimization (Ghosh et al., {2021}). 
Moreover, the same sample complexity applies when the observations are corrupted with arbitrary deterministic noise.
We provide empirical results that show that our method performs as our theoretical result predicts, and is competitive with the alternating minimization algorithm particularly in presence of multiplicative Bernoulli noise. Furthermore, we also show empirically that a recursive application of AR can significantly improve the estimation accuracy.}
\end{abstract}

\begin{IEEEkeywords}
nonlinear regression,  convex programming, max-linear model, empirical processes, and sample complexity
\end{IEEEkeywords}

%
\IEEEpeerreviewmaketitle

\section{Introduction}

\IEEEPARstart{W}{e} consider the problem of estimating the parameters $\mb \beta_{\star,1}, \dotsc, \mb \beta_{\star,k} \in \mbb{R}^p$ that determine the \emph{max-linear} function
\begin{equation}
\label{eq:maxlinear}
\mb x \in \mbb{R}^p \mapsto \max_{j\in [k]}\ \langle \mb \beta_{\star,j}, \mb x\rangle\,,
\end{equation}
from independent and identically distributed (i.i.d.) observations, where $[k]$ denotes the set $\{1,\dots,k\}$. Specifically, given the data points $\mb x_1,\dotsc,\mb x_n \in \mbb{R}^p$, and denoting the value of a max-linear function, with parameter $\mb \beta$, at these points by
\begin{equation}
\label{eq:deffi}
f_i(\mb \beta) := \max_{j\in[k]} \ \langle \mb x_i, \mb \beta_j \rangle \,,
\end{equation}
we observe the nonlinear measurement
\[y_i = f_i(\mb \beta_\star) + w_i\,,\]
of the parameter vector $\mb \beta_\star = [\mb \beta_{\star,1};\,\dotsc\,;\mb \beta_{\star,k}] \in \mbb{R}^{kp}$ where $w_i$ denotes noise for $i \in [n]$.

{The most relevant prior work studied an \textit{alternating minimization} (AM) algorithm to solve a slightly more general problem of max-affine regression \cite{ghosh2021max}.
Each iteration 
consists of a step to identify the maximizing linear models followed by least-squares update of model parameters. }
{However, we observed that their empirical performance significantly degrades with outliers, mainly due to the sensitivity of the ``maximizer identification'' step. 
} 
Leveraging recent theory for convexifying nonlinear inverse problems in the original domain \cite{bahmani2017phase,bahmani2019estimation,bahmani2019solving}, we propose an alternative approach by convex programming. Due to the inherent geometry of the formulation, the convex estimator provides stable performance in the presence of {adversarial} noise. 
It is worth mentioning that Ghosh et al. \cite{ghosh2021max} considered a random noise model, 
whereas we consider a deterministic ``gross error'' model. {Nevertheless, in the noiseless case, both results achieve exact parameter recovery at comparable sample complexities.}

\subsection{Convex estimator}
The common estimators for $\mb \beta_\star$ such as the \emph{least absolute deviation} (LAD), i.e.,
\begin{equation}
\label{eq:LADestimator}
\mathop{\mathrm{minimize}}_{\mb \beta} ~ \frac{1}{n} \sum_{i=1}^n \left|f_i(\mb \beta) - y_i\right|,
\end{equation}
are generally hard to compute as they involve nonconvex optimization. Given an ``anchor vector'' $\mb \theta$, we study the estimation of $\mb \beta_\star$ through \emph{anchored regression} (AR) that formulates the estimation by the convex program
\begin{equation}
\label{eq:estimator}
\def\arraystretch{1.5}
\begin{array}{ll}
\displaystyle \mathop{\mathrm{maximize}}_{\mb \beta} & \langle \mb \theta, \mb \beta \rangle \\
\mathrm{subject~to} & \displaystyle \frac{1}{n}\sum_{i=1}^n \left(f_i(\mb \beta) - y_i\right)_+ \leq \eta,
\end{array}
\end{equation}
where $(\cdot)_+$ denotes the positive-part function. 
{The parameter $\eta$ should be chosen so that the feasible set of \eqref{eq:estimator} is not empty.}
The anchored regression can be interpreted as a convexification of the LAD estimator. Since the observation functions \eqref{eq:deffi} are convex,  the LAD is nonconvex mainly due to the effect of the absolute value operator in \eqref{eq:LADestimator}. This source of nonconvexity is removed in anchored regression by relaxing the absolute deviation to the positive part of the error. The linear objective that is determined by the anchor vector $\mb \theta$ acts as a ``regularizer'' to prevent degenerate solutions and guarantees exact recovery of the true parameter $\mb \beta_\star$ under certain conditions on the measurement model in the noiseless scenario.

Anchored regression has been originally developed as a scalable convex program to solve the phase retrieval problem \cite{bahmani2017phase,goldstein2018phasemax} with provable guarantees. Anchored regression is highly scalable compared to other convex relaxations in this context \cite{candes2013phaselift,waldspurger2015phase} that rely on semidefinite programming. The idea of anchored regression is further studied in a broader class of nonlinear parametric regression problems with convex observations \cite{bahmani2019solving} and \emph{difference of convex} functions \cite{bahmani2019estimation}.

\subsection{Background and motivation}

A closely related problem is the max-affine regression problem. A max-affine model generalizes the max-linear model in \eqref{eq:maxlinear} by introducing an extra offset parameter to each component. Alternatively, by fixing any regressor to constant $1$, each linear component in \eqref{eq:maxlinear} has its range away from the origin, which turns the model into a max-affine model. Thus methods developed for the two models are compatible. For the sake of simplicity, we use the description in \eqref{eq:maxlinear}. If necessary, a coordinate of $\mb x_i$ can be fixed to $1$ for all $i \in [n]$. 

Since the max-affine model can approximate a convex function to an arbitrary accuracy, it has been utilized in numerous applications, particularly in machine learning and optimization.
Recently it has been shown that an extension called the \emph{max-affine spline operators} (MASOs) can represent a large class of \emph{deep neural networks} (DNNs) \cite{balestriero2018mad,balestriero2019geometry,
balestriero2020max}. They leveraged the model to analyze the expressive power of various DNNs. Max-affine model has also been leveraged to approximate Bregeman divergences in metric learning \cite{siahkamari2019learning} and utility functions in energy storage and beer brewery optimization problems \cite{balazs2016convex}.

As mentioned above, the max-affine and max-linear regression problems are challenging due to the inherent nonlinearity in the model. In the literature, the max-affine regression problem has been studied mostly as a nonlinear least squares \cite{magnani2009convex,toriello2012fitting,hannah2013multivariate,balazs2016convex,ghosh2021max,ho2019dca}:
\begin{equation}
\label{eq:LSest}
\hat{\mb \beta}=\argmin_{[\mb \beta_1; \dots; \mb \beta_k]}\sum_{i=1}^{n}\left(\max_{1\leq j \leq k}\langle\mb x_i,\mb \beta_j\rangle-\mb y_i\right)^2.
\end{equation} 
By utilizing a special structure in \eqref{eq:LSest}, a suite of {iterative optimization algorithms} have been developed \cite{magnani2009convex,toriello2012fitting,hannah2013multivariate}. 
The fact that \eqref{eq:maxlinear} is a special case of piecewise linear function allows us to divide $\mb x_1,\ldots, \mb x_n$ into $k$ partitions based on their membership in the polyhedral cones
\begin{equation}
\label{def:plyhdrl}
\mc C_j := \left\{ \mb w \in \mbb{R}^p ~:~ \left\langle \mb w, \mb \beta_{\star,j} - \mb \beta_{\star,l} \right\rangle \geq 0, ~ \forall l \neq j \right\}, \quad j \in [k],
\end{equation}
which are pairwise \emph{almost disjoint}\footnote{We say two sets are almost disjoint whenever their intersection has measure zero with respect to an underlying measure.} and cover the entire space $\mbb{R}^p$.
In other words, $\mc C_j$ is determined according to which component achieves the maximum in the max-linear model in \eqref{eq:maxlinear}. 
If this oracle information is known a priori, then the estimation is divided into $k$ decoupled linear least squares given by
\begin{equation}
\label{eq:decoupled_leastsquared}
\hat{\mb \beta}_j = \argmin_{\mb \beta_j} \sum_{i \in \mc C_j} \left(\langle\mb x_i,\mb \beta_j\rangle-\mb y_i\right)^2, \quad j \in [k].
\end{equation} 
{
However, since the oracle partition information is not available in practice, various adaptive partitioning methods have been studied. 
Magnani and Boyd \cite{magnani2009convex} proposed the \emph{least-squares partition algorithm}, which is an alternating minimization algorithm that progressively refines the estimates for both model parameters and partitions similar to the $k$-means clustering algorithm. 
Hannah and Dunson \cite{hannah2013multivariate} proposed the \emph{convex adaptive partitioning} (CAP) method, which is a greedy algorithm that builds a partitioning of covariates through dyadic splitting and refit. 
They have shown that the CAP method is asymptotically consistent. 
Bal\'asz \cite{balazs2016convex} proposed the \emph{adaptive max-affine partitioning algorithm}, which combines AM and CAP using a cross-validation scheme and significantly reduces computation time by partitioning at the median. 
The performance of all these algorithms critically depends on the initialization. 
Moreover, they proposed an initialization scheme based on a random search, but its search space grows exponentially in $p$. 
In a later work, Ghosh et al. \cite{ghosh2021max} further improved the random search method by using a spectral method so that the size of the search space does not depend on $p$, even though it grows exponentially in $k$. 
Their initialization scheme remains a practical method when $k = O(1)$. 
Toriello and Vielma \cite{toriello2012fitting} formulated \eqref{eq:LSest} as a mixed integer program based on the ``big-M'' method. { Similar to other methods based on mixed integer programming, their method also suffers from a high computational cost and does not scale well to large instances.} 
Ho et al. \cite{ho2019dca} applied the \emph{DC algorithm} \cite{tao1998dc} to a reformulation of \eqref{eq:LSest} as a \emph{difference-of-convex} program. They showed, empirically, fast convergence of their algorithm to a local minimum.}

{ The aforementioned methods demonstrated satisfactory empirical performance on selected benchmark sets at a tractable computational cost. However, except for the method of Ghosh et al. \cite{ghosh2021max}, these methods lack non-asymptotic statistical guarantees even under reasonable simplifying assumptions. 
A non-asymptotic analysis for the alternating minimization method is first established in \cite{ghosh2021max} which also provides a provably accurate initialization scheme.}

\subsection{Contributions}
We provide a scalable convex estimator for the max-linear regression problem that is formulated as a linear program and is backed by statistical guarantees.
Under the standard Gaussian covariate model, the convex estimator \eqref{eq:estimator} is guaranteed to recover the regression parameters exactly with high probability if the number of observations $n$ scales as $\pi_{\min}^{-4} p$ up to some logarithmic factors { where $\pi_{\min}$ is defined as $\min_{j \in [k]} \P\left(\mb g\in\mc C_j\right)$ for $\mb g\in\mathrm{Normal}(\mb 0,\mb I_p)$}. This sample complexity implicitly depends on $k$ (i.e., the number of components) through $\pi_{\min}$.  Particularly, when the $k$ linear components form a ``well-balanced partition'' in the sense that they are equally likely to achieve the maximum, the smallest probability $\pi_{\min}$ is close to $1/k$ and the derived sample complexity reduces to $k^4 p$ up to the logarithmic factors. { This is comparable to the sufficient condition for exact recovery $n=\mathcal{O}(k p \pi_{\min}^{-3})$ of alternating minimization algorithm {\cite{ghosh2021max}} in the noise-free scenario. Monte Carlo simulations show that our proposed convex estimator, as a convexification of the LAD estimator, exhibits robustness against outliers, whereas AM appears to be fragile in the presence of impulsive noise.  Furthermore, the repetition of AR significantly improves the accuracy of the estimation.}

\section{Accuracy of the Convex Estimator}
\label{sec:mainresult}
In this section, we provide our main results on the estimation error of { the convex program} in \eqref{eq:estimator}. 
We consider {the} anchor vector $\mb \theta$ constructed from a given initial estimate $\tilde{\mb \beta} = [\tilde{\mb \beta}_1;\dots;\tilde{\mb \beta}_k] \in \mbb{R}^{kp}$ as
{
\begin{equation}
\label{eq:anchorvector}
\mb \theta = \frac{1}{2n} \sum_{i=1}^n \nabla f_i(\tilde{\mb \beta})=\frac{1}{2n}\sum_{i=1}^n\sum_{j=1}^k\bbone_{\{\mb x_i\in \tilde{\mathcal{C}}_j\}}\mb e_j\otimes\mb x_i,
\end{equation}
where 
\begin{equation}
\label{def:plyhdrl_tilde}
\tilde{\mc C}_j := \left\{ \mb w \in \mbb{R}^p ~:~ \langle \mb w, \tilde{\mb \beta}_j - \tilde{\mb \beta}_l \rangle \geq 0, ~ \forall l \neq j \right\}, \quad j \in [k]
\end{equation}
and $\mb e_j \in \mathbb{R}^k$ denotes the $j$th column of the $k$-by-$k$ identity matrix $\mb I_k$ for $j \in [k]$.
Since $f_i$ is differentiable except on a set of measure zero, with a slight abuse of terminology, $\nabla f_i$ in \eqref{eq:anchorvector} is referred to as the ``gradient''.} { In \eqref{eq:anchorvector}, the choice of anchor vector follows from the geometry of convex equations \cite[Section~1.4]{bahmani2019solving}. 
In particular, in the noiseless case, $\mb \beta_\star$ would be a solution to 
\[
\def\arraystretch{1.5}
\begin{array}{ll}
\displaystyle \mathop{\mathrm{maximize}}_{\mb \beta} & \langle \mb \theta, \mb \beta \rangle \\
\mathrm{subject~to} & \displaystyle f_i(\mb \beta)\leq y_i,\quad\forall i\in[n].
\end{array}  
\] 
if it satisfies the Karush–Kuhn–Tucker condition
\[
-\mb \theta+\sum_{i=1}^{n}\lambda_i\nabla f_i(\mb \beta_\star)= \mb 0
\] for some $\lambda_1,\ldots,\lambda_n \geq 0$. 
In other words, the anchor vector $\mb \theta$ needs to be in the $\mathrm{cone}\left(\left\{\nabla f_i(\mb \beta)\right\}_{i=1}^n\right)$.
The choice of $\mb \theta$ in \eqref{eq:anchorvector} is inspired by this condition.} 

The following theorem illustrates the sample complexity and the corresponding estimation error achieved by the estimator in \eqref{eq:estimator}. The estimation error is measured as the sum of the $\ell_2$ norms of the difference between the corresponding components of the ground truth $\mb \beta_\star$ and the estimate $\hat{\mb \beta}$.

\begin{theorem}
\label{thm:main}
Let $\{\mc C_j\}_{j=1}^k$ and $\{\tilde{\mc C}_j\}_{j=1}^k$ be respectively defined as in \eqref{def:plyhdrl} and \eqref{def:plyhdrl_tilde}. 
Let $\mb \theta$ be as in \eqref{eq:anchorvector} and $\{\mb x_i\}_{i=1}^n$ be independent copies of $\mb g \sim \mathrm{Normal}(\mb 0, \mb I_p)$. 
Then there exist absolute constants $c,C>0$, for which the following statement holds for all $\mb w \in \mathbb{R}^n$ with probability at least $1 - \delta$: 
Suppose that $\tilde{\mb \beta}$ is independent of $\{\mb x_i\}_{i=1}^n$ satisfies
\begin{equation}
\label{eq:initial_condition}
\begin{aligned}
&\frac{\|(\tilde{\mb \beta}_j-\tilde{\mb \beta}_{j'}) - (\mb \beta_{\star,j}-\mb \beta_{\star,j'})\|_2}{\|\mb \beta_{\star,j}-\mb \beta_{\star,j'}\|_2} \leq\\
& \min\left(0.1,
\frac{c \pi_{\min}^4}{2k}\log^{-1/2}\left(\frac{k}{c\pi_{\min}^4}\right) \right),\,~\forall j, j' \in [k]:j \neq  j'.
\end{aligned}
\end{equation} If the feasible set of the optimization problem in \eqref{eq:estimator} is not empty and the number of observations satisfies
\begin{equation}
\label{thm:samplecom_noise}
n \geq C \, {\zeta}^{-2} \left( 4p \log^3p \log^5k + 4\log(1/\delta) \log k \right),
\end{equation} 
where
\[
\zeta := {\min_{j\in[k]}
\sqrt{\frac{\pi}{32}} \, \P^2\{\mb g\in {\mc C_j}\}}-{2\max_{j\in[k]} \sqrt{\P\{\mb g\in\tilde{\mc C}_j \triangle{ \mc C_j}\}}},
\] 
then the solution $\hat{\mb \beta}$ to \eqref{eq:estimator} obeys
\begin{equation}
\label{thm:err_bnd}
\sum_{j=1}^{k}\|\mb \beta_{\star,j}-\hat{\mb \beta}_j\|_2\leq\frac{2}{\zeta}\left(\eta+\frac{1}{n}\sum_{i=1}^n (w_i)_+\right).
\end{equation}
\end{theorem}
{

To make the optimization problem in \eqref{eq:estimator} feasible, it suffices to include the ground-truth $\mb \beta_\star$ in the feasible set, i.e. 
\begin{equation}
\label{eq:cond_eta}
\eta\geq\frac{1}{n}\sum_{i=1}^{n}(-w_i)_+,
\end{equation}}
The error bound in \eqref{thm:err_bnd} reduces to $\frac{2}{\zeta n}\sum_{i=1}^{n}|w_i|$ when the parameter $\eta$ is chosen so that the equality in \eqref{eq:cond_eta} is achieved. 
In practice, the noise entries are unknown and this error cannot be achieved. 
If $\eta$, as a parameter that determines the power of the adversary, is chosen so that $\eta \geq \|\mb w\|_1/n$, then the resulting error bound becomes $\frac{4 \|\mb w\|_1}{n\zeta}$. 
In particular, if $\eta$ satisfies $\eta \geq \|\mb w\|_\infty$, then the resulting error bound will be $\frac{4 \|\mb w\|_\infty}{\zeta}$. 
The latter condition will be readily satisfied in practical applications. { Furthermore, as shown in the empirical sensitivity analysis in Section~\ref{sec:experiment}, the estimation error does not crucially depend on the choice of $\eta$.}

{
\subsection{Comparison with an oracle estimator}
\label{sec:compare_oracle}

Assuming that the additive noise is i.i.d. sub-Gaussian with zero mean and variance $\sigma^2$, the error bound in \eqref{thm:err_bnd} becomes $\tilde{O}(\sigma/\zeta)$, which implies that our estimator is not consistent. 
However, in the adversarial noise setting which is our focus, we can compare the performance case of our estimator with an oracle-assisted estimator, similar to the analysis carried out in \cite{candes2010matrix} for the matrix completion problem.
In this scenario, the error bound by the convex estimator nearly matches the performance of an oracle-assisted estimator (up to a factor determined by $\mb \beta_\star$).

\begin{lem} 
\label{lem:oracle_bound}
Consider the same regression problem as in \Cref{thm:main} with $\{\mb x_i\}_{i=1}^n$ being independent copies of $\mb g \sim \mathrm{Normal}(\mb 0, \mb I_p)$. 
Suppose that $\{\mathcal{C}_j\}_{j=1}^k$ in \eqref{def:plyhdrl} is given as the oracle information. Then there exists an absolute constant $C > 0$ such that if 
\begin{equation}
\label{eq:sample_comp_oracle}
n \geq C \pi_{\min}^{-2} \max(kp\log(n/p),\log(1/\delta)),
\end{equation}
then the estimates $\{\hat{\mb \beta}_j\}_{j=1}^k$ obtained through the decoupled least-squares \eqref{eq:decoupled_leastsquared} satisfy
\begin{equation}
\label{eq:lb_oracle}
{\sup_{\|\mb w\|_\infty \leq \eta'}}
\sum_{j=1}^k\|\mb \beta_{\star,j}-\hat{\mb \beta}_j\|_2 \gtrsim \frac{\pi_{\min}^{3/2} \eta'}{\pi_{\max}}
\end{equation}
with probability at least $1-\delta$, where $\pi_{\max} := \max_{j \in [k]} \P(\mb g \in \mc C_j)$.
\end{lem}
\begin{IEEEproof}
See \Cref{sec:proof_oracle}.
\end{IEEEproof}

One expects that the oracle estimator nearly achieves the optimal performance. 
However, {since the lower bound by \Cref{lem:oracle_bound} does not vanish as $n$ increases to infinity}, the oracle estimator is also biased in the presence of adversarial noise.  
{Note that the lower bound in \eqref{eq:lb_oracle} remains the same with the feasible set substituted by $\|\mb w\|_1 \leq n \eta'$. 
Furthermore, if $\eta$ achieves the equality in \eqref{eq:cond_eta}, then the error bound in \eqref{thm:err_bnd} implies
\begin{equation}
\label{thm:unif_err_bnd}
\sup_{\|\mb w\|_1 \leq n \eta'} 
\sum_{j=1}^{k}\|\mb \beta_{\star,j}-\hat{\mb \beta}_j\|_2\leq\frac{2 \eta'}{\zeta}.
\end{equation}
Therefore, in this scenario, the error bound in \eqref{thm:unif_err_bnd}} matches that by the oracle estimator up to an extra factor {$O(\pi_{\max}/\zeta \pi_{\min}^{3/2})$}. 
In particular, {if $\pi_{\max} \approx \pi_{\min} \approx 1/k$}, then the error by the convex estimator is sub-optimal up to a factor {$k^{5/2}$} relative to the oracle estimator. }

{
\subsection{Initialization}
\label{subsec:init}
{ Theorem~\ref{thm:main} provides an error bound by the convex estimator given an initial estimate satisfying \eqref{eq:initial_condition}. 
Finding such an initial estimate is not a trivial task. 
Ghosh et al. \cite{ghosh2021max} proposed an initialization scheme that consists of dimensionality reduction by a spectral method \cite[Algorithm~2]{ghosh2021max}, followed by a low-dimensional random search \cite[Algorithm~3]{ghosh2021max}. It has been shown that {if the observations are corrupted with independent sub-Gaussian noise, then} the initialization scheme provides an estimate within a certain neighborhood of the ground-truth in a polynomial time when $k = O(1)$. 
{Their proof only uses the fact that the maximum magnitude of sub-Gaussian noise entries is bounded with high probability. Below, we extend the analysis of their initialization scheme to the scenario where the noise vector $\mb w$ is a fixed \emph{deterministic vector} under the only condition that $\|\mb w\|_\infty \leq \eta'$.}


To this end, we first recall the first stage in their initialization scheme that extracts the eigenvectors corresponding to the $k$ dominant eigenvalues of the following matrix:
\begin{equation}
\label{eq:Mhat}
\widehat{\mb M}=\frac{2}{n} \left( \sum_{i=1}^{n/2}y_i\mb x_i \right) \left( \sum_{i=1}^{n/2}y_i\mb x_i \right)^\top + \frac{2}{n}\sum_{i=1}^{n/2}y_i\left(\mb x_i\mb x_i^\T-\mb I_p\right). 
\end{equation}
{
Let $\widetilde{\mb M}$ denote the noise-free version of $\widehat{\mb M}$, i.e., 
\[
\begin{aligned}
&\widetilde{\mb M} = \frac{2}{n}\sum_{i=1}^{n/2} \left( \max_{j \in [k]} \langle \mb \beta_{\star,j}, \mb x_i \rangle \right) \left(\mb x_i\mb x_i^\T-\mb I_p\right)+\\
&\frac{2}{n} \left( \sum_{i=1}^{n/2} \left( \max_{j \in [k]} \langle \mb \beta_{\star,j}, \mb x_i \rangle \right) \mb x_i \right) \left( \sum_{i=1}^{n/2} \left( \max_{j \in [k]} \langle \mb \beta_{\star,j}, \mb x_i \rangle \right) \mb x_i \right)^\top. 
\end{aligned}
\]
Then the ground-truth parameter vectors $\mb \beta_1^\star, \dots, \mb \beta_k^\star$ are in the columns space of $\E \widetilde{\mb M}$.}
Ghosh et al. {\cite{ghosh2021max}} derived a tail bound on the perturbation of those eigenvectors due to sub-Gaussian noise. 
We provide an analogous perturbation analysis in the { deterministic} noise setting. 
The following lemma provides upper bounds on the contributions of the noise to the two summands in the right-hand side of \eqref{eq:Mhat}. 
\begin{lem}
\label{lem:noise_norm_bound}
Suppose that $\mb x_1,\ldots,\mb x_n \overset{\mathrm{i.i.d.}}{\sim} \mathrm{Normal}(\mb 0,\mb I_p)$ and $\mb w:=\left(w_1,\ldots,w_n\right)\in\mathbb{R}^n$ are arbitrary fixed. 
Then the following inequalities hold with probability at least $1-\delta$:
\begin{equation}
\label{eq:bounds_noise_terms1}
\begin{aligned}
&\left\|\frac{1}{n}\sum_{i=1}^n w_i\mb x_i\right\|_2
\lesssim
\|\mb w\|_\infty \cdot \sqrt{\frac{p + \log(1/\delta)}{n}},\\
&\left\|\frac{1}{n}\sum_{i=1}^n w_i\left(\mb x_i\mb x_i^\T-\mb I_p\right)\right\|
\lesssim\\
&\qquad\qquad\|\mb w\|_\infty \cdot \max\left( \sqrt{\frac{p + \log(1/\delta)}{n}}, \frac{p + \log(1/\delta)}{n} \right).
\end{aligned}
\end{equation} 
\end{lem}
\begin{IEEEproof}
See \Cref{sec:proof:noise_norm_bound}.
\end{IEEEproof}

Let $\hat{\mb U}$ be a matrix whose columns are the $k$ dominant eigenvectors of $\widehat{\mb M}$. 
Furthermore, let the columns of ${\mb U}^\star$ be the eigenvectors of the noise-free component of $\E \tilde{\mb M}$. 
Then, plugging the results in \Cref{lem:noise_norm_bound} into the proof of \cite[Lemma~8]{ghosh2021max} yields that
\begin{equation}
\label{eq:modified_Theorem3}
\begin{aligned}
&\left\|\hat{\mb U}\hat{\mb U}^\T-{\mb U}^\star\left({\mb U}^\star\right)^\T\right\|_{\mathrm{F}}^2\lesssim\\
&\left(\frac{\|\mb w\|_{\infty}^2+\max_{j\in[k]}\|\mb \beta_{\star,j}\|_1^2}{\lambda_k^2(\E {\widetilde{\mb M}})}\right)\frac{kp\log^3(pk/\delta)}{n}
\end{aligned}
\end{equation} 
holds with probability at least $1-\delta$. 
This is analogous to \cite[Theorem~2]{ghosh2021max} which addresses the case of the sub-Gaussian noise. 
The remainder of their initialization scheme does not depend on any assumption on the noise model. 
Therefore, the resulting initial estimate satisfies \eqref{eq:initial_condition} if 
\begin{equation}
\label{eq:sample_init}
\begin{aligned}
&n\gtrsim\frac{k^6\log(k/\pi_{\min})}{\pi_{\min}^{13}}\cdot\\
&\max\Bigg\{{\|\mb w\|_{\infty}^2}\log\left(1+\frac{\max_{j\in[k]}\|\mb \beta_{\star,j}\|_2 k^4\log^{1/2}(k/\pi_{\min})}{\pi_{\min}^{5.5}}\right),\\
&\left(\|\mb w\|_{\infty}^2+\max_{j\in[k]}\|\mb \beta_{\star,j}\|_1^2\right)\frac{{k^{3}} p\log^3(n/k) \cdot\max_{j\in[k]}\|\mb \beta_{\star,j}\|_2}{\lambda_{k}^2(\mathbb{E}{\widetilde{\mb M}})}\Bigg\}.
\end{aligned}
\end{equation}
The condition in \eqref{eq:sample_init} is obtained by applying the initialization condition \eqref{eq:initial_condition} and substituting $\sigma$ by $\|\mb w\|_{\infty}$ in \cite[Equation~20]{ghosh2021max}. 
The requirement for the initial point of anchored regression  \eqref{eq:initial_condition} is more relaxed in terms of the dependence on $\pi_{\min}$, compared to the similar requirement for the alternating minimization method \cite[Theorem~1]{ghosh2021max}. 
Furthermore, for both anchored regression and alternating minimization, the sample complexity of the initialization dominates that of the subsequent stages of the algorithms.}

{In the above paragraphs, we have shown that the anchored regression combined with the spectral initialization provides a stable estimate in the presence of an arbitrarily fixed deterministic noise of bounded magnitudes. However, this result does not extend to the adversarial noise setting in  \Cref{thm:main} and \Cref{lem:oracle_bound}. Maximization over $\mb w$s that obey $\|\mb w\|_\infty \leq \eta'$ in \eqref{eq:bounds_noise_terms1}, can be addressed effectively by { taking the union bound over extreme points of $\ell_\infty^n$ ball with the radius $\eta'$ and choosing $\delta = 2^{-n}\bar{\delta}$ with $\bar{\delta}\in [0,1]$ denoting overall error probability. Therefore, the terms $\frac{p+\log(1/\delta)}{n}$ in \eqref{eq:bounds_noise_terms1} are equal to $\frac{p+n\log(2)+\log(1/\bar{\delta})}{n}$, which are clearly bounded from below by $\log 2$.}
Consequently, in the adversarial setting, the error in the spectral method does not vanish as $n$ grows, and the desired accuracy for the initialization scheme cannot be established. Considering a relaxed condition $\|\mb w\|_1 \leq n \eta'$ exacerbates the situation and the error bound in the spectral method becomes even larger. 
}

{
\subsection{Compariosn with alternating minimization in computational cost}
\label{subsec:comparison}
This section compares AR and AM in their computational costs. First,} AR is implemented via an equivalent formulation with auxiliary variables $\mb t:=[t_1; \dots ; t_n] \in\mathbb{R}^n$ as 

\begin{equation}
\label{eq:estimatorasLP}
\arraycolsep=1.4pt\def\arraystretch{1.5}
\begin{array}{cl}
\displaystyle 
\mathop{\mathrm{maximize}}_{(\mb \beta_j)_{j=1}^k, (t_i)_{i=1}^n} & \displaystyle \langle {\mb \theta}, [\mb \beta_1;\dots;\mb \beta_k] \rangle \\ 
\mathrm{subject~to} & \displaystyle t_i \geq 0,~ \langle \mb x_i, \mb \beta_j \rangle - y_i \leq t_i,~  \frac{1}{n}\sum_{i=1}^{n}t_i\leq\eta, \\
&~~~~~~~~~~~~~~~~~~~~\quad \forall i\in[n],~\forall j\in[k]\,.
\end{array}
\end{equation}
}
{ To compute the computational costs for \eqref{eq:estimatorasLP}, we further reformulate it into the form of a linear program  $\min_{\mb A\mb s=\mb b,\mb s\geq \mb 0} \langle\mb c,\mb s\rangle$ by introducing an additional $nk+1$ auxiliary variables to convert the second and third inequality constraints into equality constraints. Then, we have $nk+1$ equality constraints and $2pk+nk+n+1$ variables. By \cite{van2020deterministic}, finding its exact solution costs $\tilde{O}\left(((n+p)k)^c\right)$ with $c\approx2.38$. In contrast, with finitely many operations, AM can find only an approximate solution. 
The per-iteration cost of AM is ${O}(nkp^2)$. In the noiseless case, due to the linear convergence of AM, the total cost to obtain an $\epsilon$-accurate solution is ${O}(nkp^2\log(1/\epsilon))$. 

In a special case where the observations are almost equally distributed over the linear components of the max-linear model, we have $\pi_{\min} \approx \pi_{\max} \approx 1/k$. 
Consequently, the sample complexity for both estimators is $\tilde{O}(pk^4)$. Thus, the computational costs for AR and AM become $\tilde{O}(p^{2.38}k^{12})$ and $\tilde{O}(p^3k^5)$, respectively. When $p$ is much larger than $k$ (specifically, $p>k^{14}$, the computational cost of AR is significantly lower than that of AM. However, in the opposite scenario, AM is more cost-effective. We summarize the comparison with respect to the computational cost, sample complexity and model assumption in 
\Cref{table:comparison}.
}

\begin{table*}[h]
\label{table:comparison}
\centering
\caption{Comparison of local convergence of AR and AM.\footnotemark}
\begin{tabular}{c|c|c|c|c|c}
\hline
Method & Cost for $\epsilon$-accuracy &  Cost for an ideal instance & Sample complexity & Covariate model  & Noise model \\ 
\hline\hline
AR & $\tilde{O}\left(\left((n+p)k\right)^{2.38}\right)$ & $\tilde{O}(p^{2.38}k^{12})$ & $\tilde{O}\left(\pi_{\min}^{-4}p\right)$ & Gaussian & Adversarial \\
\hline
AM \cite{ghosh2021max} & $O(nkp^2\log(1/\epsilon))$ & $\tilde{O}(p^3k^5)$ & $O(\pi_{\min}^{-3}kp)$ & Gaussian  & Sub-Gaussian \\
\hline
\end{tabular}
\end{table*}
\footnotetext{The spectral initialization is not included in this comparison. To incorporate the initialization into the analysis, it is necessary to modify the noise model from an adversarial noise model to a gross error model as discussed in \Cref{subsec:init}.}

\section{Numerical results}
\label{sec:experiment}
We present a set of Monte Carlo simulations to evaluate the performance of the estimator by anchored regression numerically. 
The experiments were designed to illustrate the following perspectives on the estimation performance: 
i) The empirical phase transition on exact recovery without noise corroborates Theorem~\ref{thm:main}; 
ii) Further iterations of AR with updated anchor vectors significantly reduce the estimation error; 
iii) AR provides a competitive empirical performance with additive Gaussian noise to AM; 
iv) AR provides a stable estimation in the presence of sparse noise, where the performance of AM significantly deteriorates. { We implement AR by the linear program given in \eqref{eq:estimatorasLP}.} Since \eqref{eq:estimatorasLP} is in the standard form of a linear program, it can be solved efficiently by readily available software such as CPLEX and Gurobi \cite{gurobi}. AR is compared to the version of AM by Ghosh et al. {\cite{ghosh2021max}}. 
For a fair comparison, we let both methods start from the same initial estimate, which will be specified later. 

In the Monte Carlo simulations, the regressors $\mb x_1,\dots,\mb x_n$ are generated as independent copies of a random vector following $\mathrm{Normal}(\mb 0,\mb I_p)$, as assumed in Theorem~\ref{thm:main}. 
For each run, the estimation error is measured up to permutation ambiguity, that is, the error is calculated as the minimum of $\sum_{j=1}^{k}\|\hat{\mb \beta}_{\pi(j)} - \mb \beta_{\star,j}\|_2/\sum_{j=1}^{k}\|\mb \beta_{\star,j}\|_2 $ over all possible permutation $\pi$ over segment indices, where $(\mb \beta_{\star,j})_{j=1}^k$ and $(\hat{\mb \beta}_j)_{j=1}^k$ denote the ground-truth parameters and their estimates, respectively. 

Since both AR and AM algorithms operate provided suitably initialized parameter, it is crucial to obtain an initial estimate, which lands near the ground-truth parameter. 
To this end, throughout the simulations, we apply the heuristic known as the \textit{AM with repeated random initialization} in \cite{balazs2016convex}, summarized as follows: One repeats the following procedure for $q \in [m]$: i) Randomly generate parameters $\mb \beta_{q,1}^{r},\ldots,\mb \beta_{q,k}^{r}\in\mathbb{R}^{p}$. 
ii) Run the AM algorithm from given initial estimates for $I_{\mathrm{init}}$ iterations and obtain estimates $\mb \beta_{q,1}^{o}, \dots, \mb \beta_{q,k}^{o}$. 
Then choose the set of parameters $\mb \beta_{q',1}^{o}, \ldots \mb \beta_{q',k}^{o}$, which achieves the least empirical loss in \eqref{eq:LSest}, i.e. 
\[
q'=\argmin_{q\in[m]}\sum_{i=1}^{n}\left(\max_{1\leq j\leq k}\langle\mb x_i,\mb \beta_{q,j}^o\rangle-y_i\right)^2.
\]
Throughout all simulations, the initialization parameters are set to $m=200$ and $I_{\mathrm{init}}=10$. 
Moreover, the maximum iteration number for the AM algorithm, denoted by $I_{\mathrm{AM}}$, is set to $I_{\mathrm{AM}}=120$.

\begin{figure}[!htb]
        \centering
        \subfigure[AR]{\label{fig:ARa}
		\includegraphics[scale=0.28]{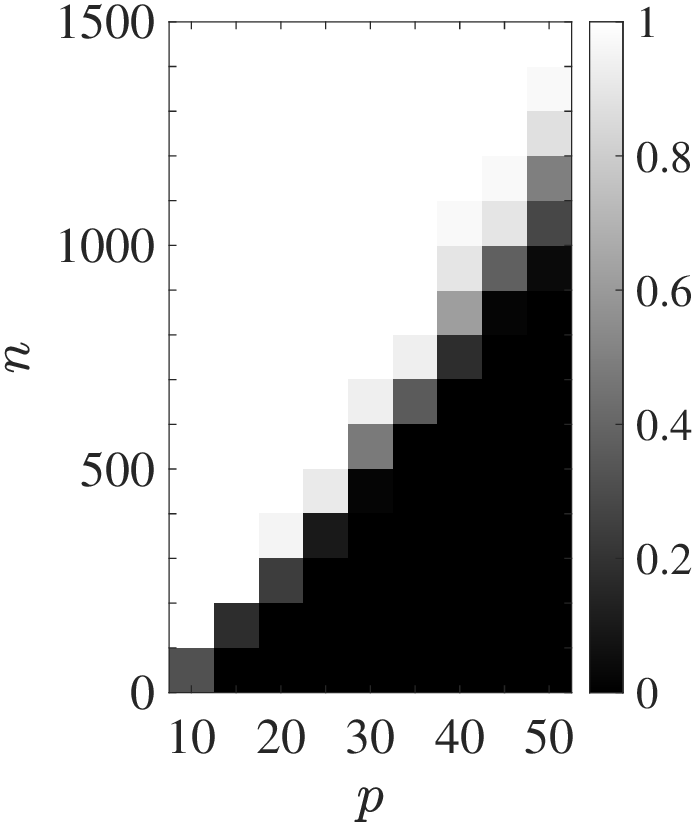}}
	    \hspace{0.05\textwidth}
        \subfigure[AM]{\label{fig:AMa}
	    \includegraphics[scale=0.28]{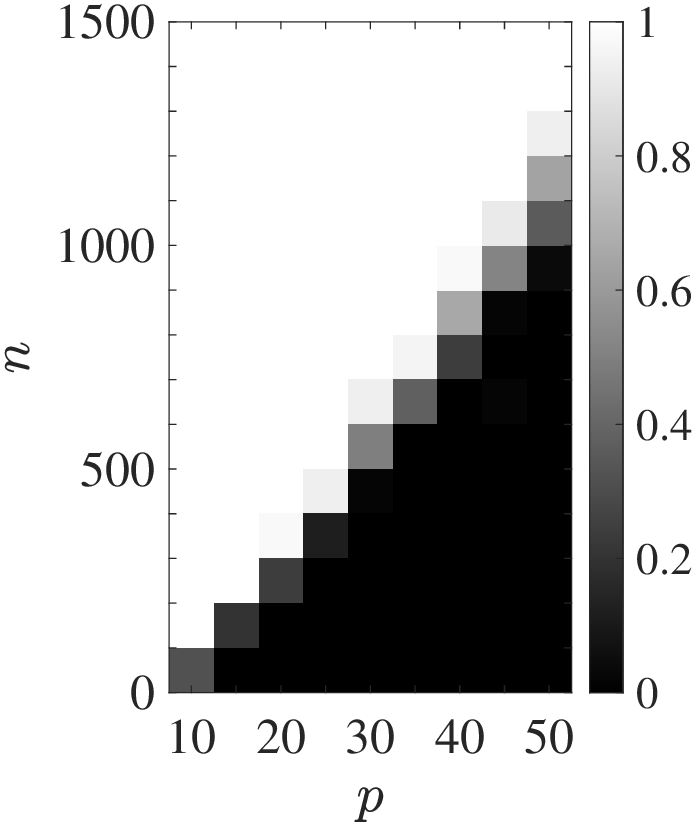}}
        \caption{Phase transition of recovery rate for varying $n$ and $p$ in the noiseless case ($k=5$).}
        \label{fig:k5} 
\end{figure}
\begin{figure}
        \centering
        \subfigure[AR]{
        \label{fig:ARb}
		\includegraphics[scale=0.28]{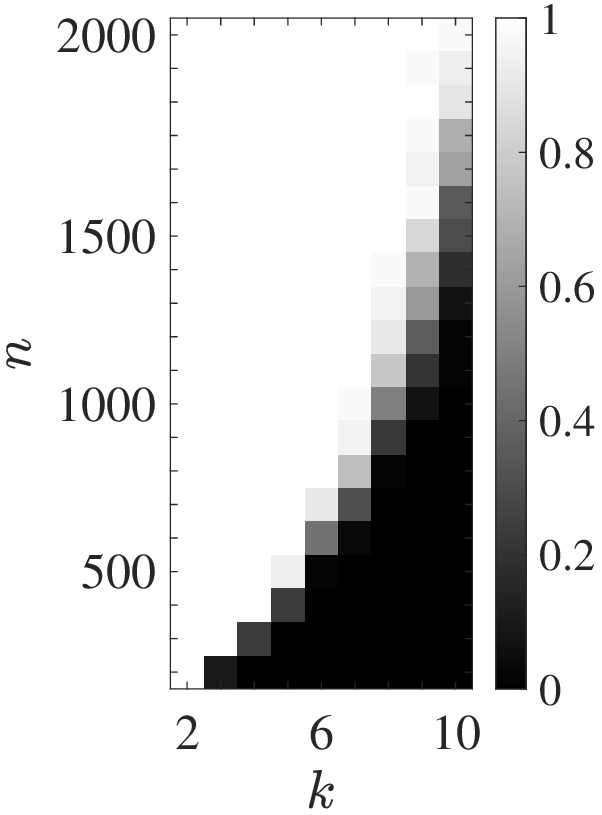}}
	    \hspace{0.05\textwidth}
        \subfigure[AM]{
        \label{fig:AMb}
		\includegraphics[scale=0.28]{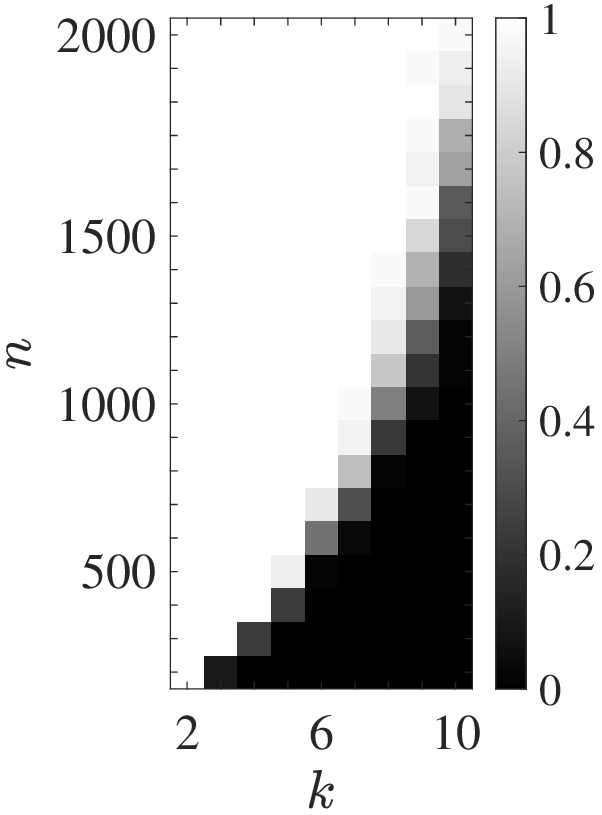}}
        \caption{Phase transition of recovery rate for varying $n$ and $k$ in the noiseless case ($p=20$).}
        \label{fig:p20} 
\end{figure}

\cref{fig:k5,fig:p20} illustrate the empirical phase transition of exact recovery in the noise-free scenario as a function of the sample size $n$ per varying dimension parameters, which are the ambient dimension $p$ and the number of segments $k$.
The reconstruction is determined as success if the normalized estimation error is below $10^{-5}$. 
The recovery rate is calculated as the ratio of success out of $50$ trials. 
In this simulation, we assume that $k \leq p$. 
To satisfy the ``well-balance partition'' condition, we generate the ground-truth parameter vectors so that they are mutually orthogonal one another. 


Figure~\ref{fig:k5} shows that for both AR and AM, the phase transition occurs when $n$ grows linearly with $p$ while $k$ is fixed to $5$. 
This observation qualitatively coincides with the sample complexity by Theorem~\ref{thm:main}.
A complementary view is provided by Figure~\ref{fig:p20} for varying $k$ while $p$ is fixed to $20$.  
Here, the phase transition occurs when $n$ is proportional to $k^t$ for some constant $t\in(1,2)$. 
The order of this polynomial is smaller than the corresponding result by Theorem~\ref{thm:main}, where $n$ is proportional to $k^4$. 
{A similar gap between theoretical sufficient condition and empirical phase transition was observed for AM in the noise-free setting \cite[Appendix~L]{ghosh2019max}. }
Overall, as shown in these figures, AR and AM provide similar empirical performance in the noiseless scenario. 



\begin{algorithm}[t]
	\caption{Iterative Anchored Regression (IAR)}
	\label{ItrAR}
	\begin{algorithmic}[1]
		\STATE {\bfseries Input:} data $\left\{\mb x_i, y_i\right\}_{i=1}^n$; initialized parameter ${\tilde {\mb \beta}}\in{\mathbb{R}^{kp}};$ fidelity upper bound $\eta$; {max. number of iterations} $I_{\mathrm{IAR}}$
		\STATE {\bfseries Output:} estimated parameter ${\hat {\mb \beta}}\in\mathbb{R}^{pk}$
		\FOR{$i=1$ {\bfseries to} $I_{\mathrm{IAR}}$}
 		\STATE{Compute anchor vector $\mb \theta$ from $\tilde{\mb \beta}$ by \eqref{eq:anchorvector}}
		\STATE {
		    Estimate $\hat {\mb \beta}$ by anchored regression in \eqref{eq:estimatorasLP}
		}
		\STATE{
		$\tilde{\mb \beta} \leftarrow \hat{\mb \beta}$}
		\ENDFOR

	\end{algorithmic}
\end{algorithm}

\begin{figure*}
\center
\hfill
\subfigure[$\sigma=0.05$]{
\includegraphics[scale=0.28]{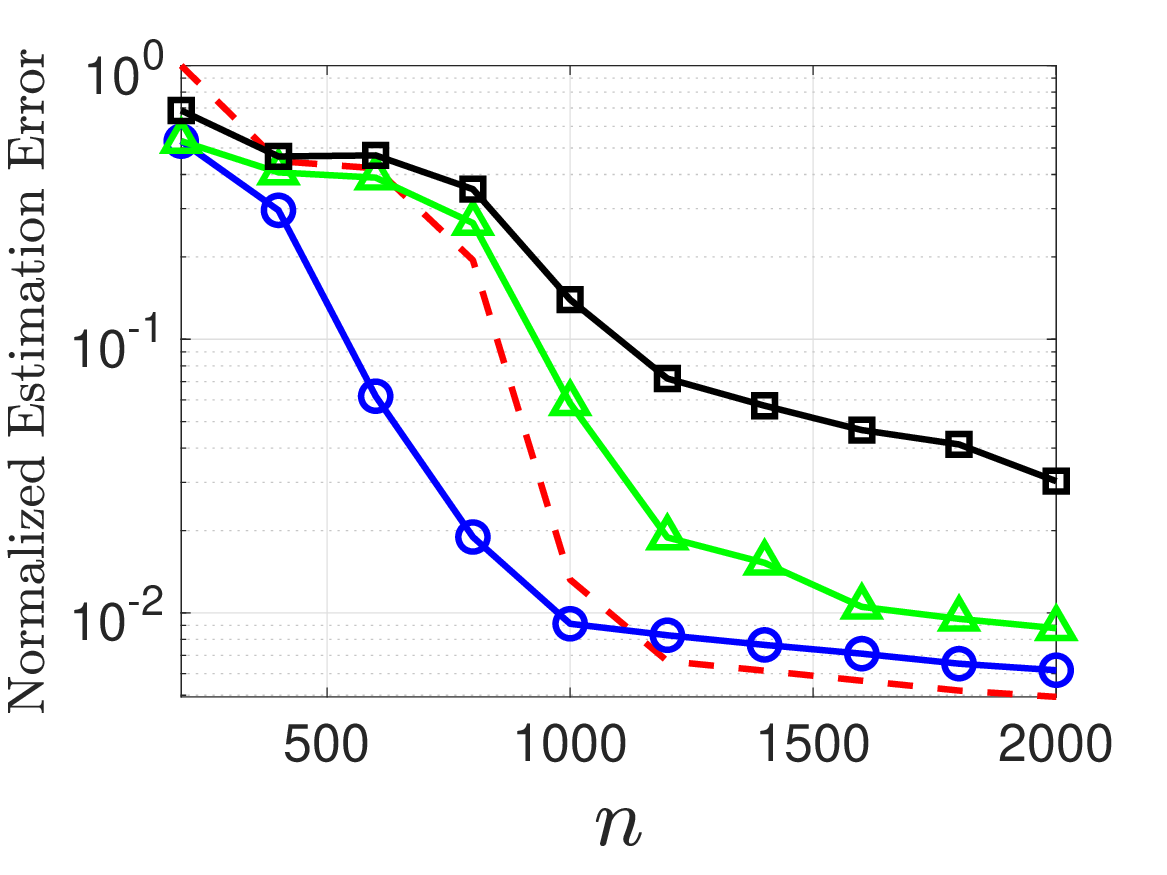}}
\hfill
\subfigure[$\sigma=0.1$]{
\includegraphics[scale=0.28]{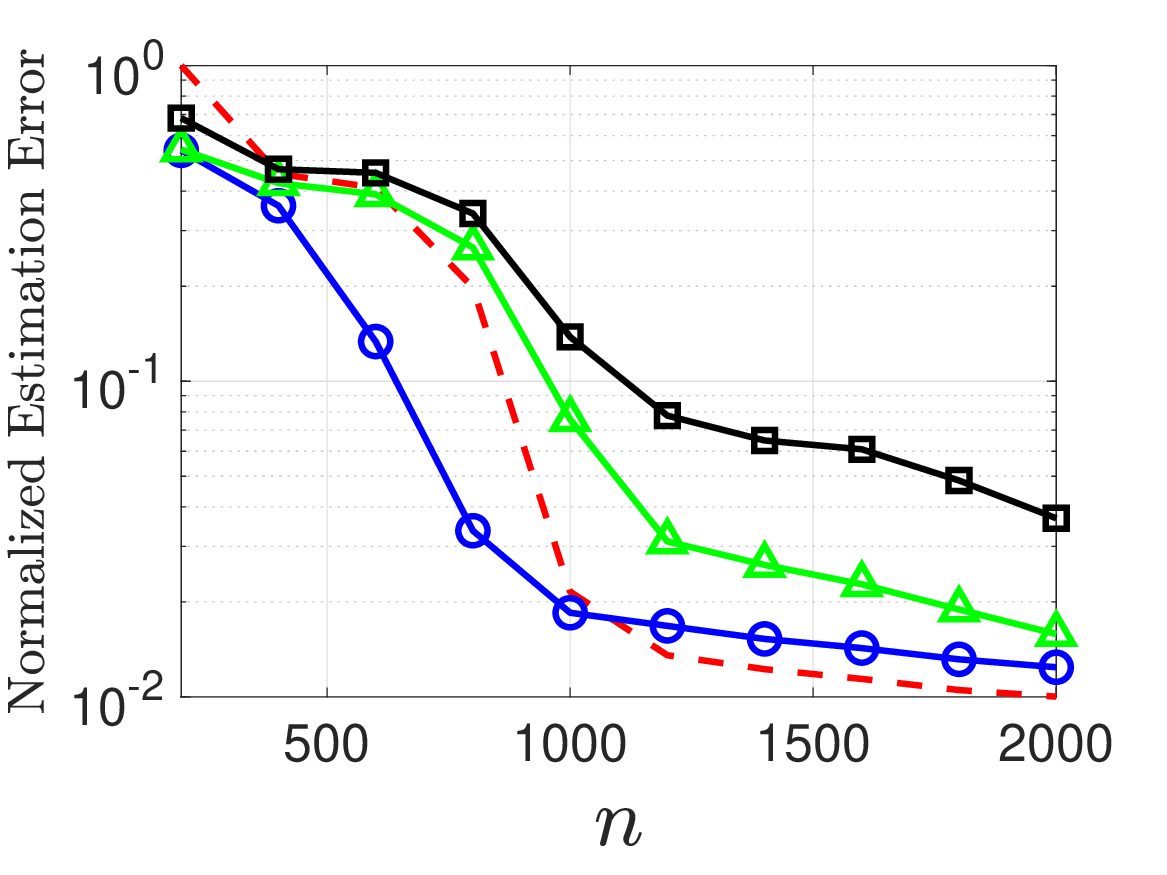}}
\hfill
\subfigure[$\sigma=0.2$]{
\includegraphics[scale=0.28]{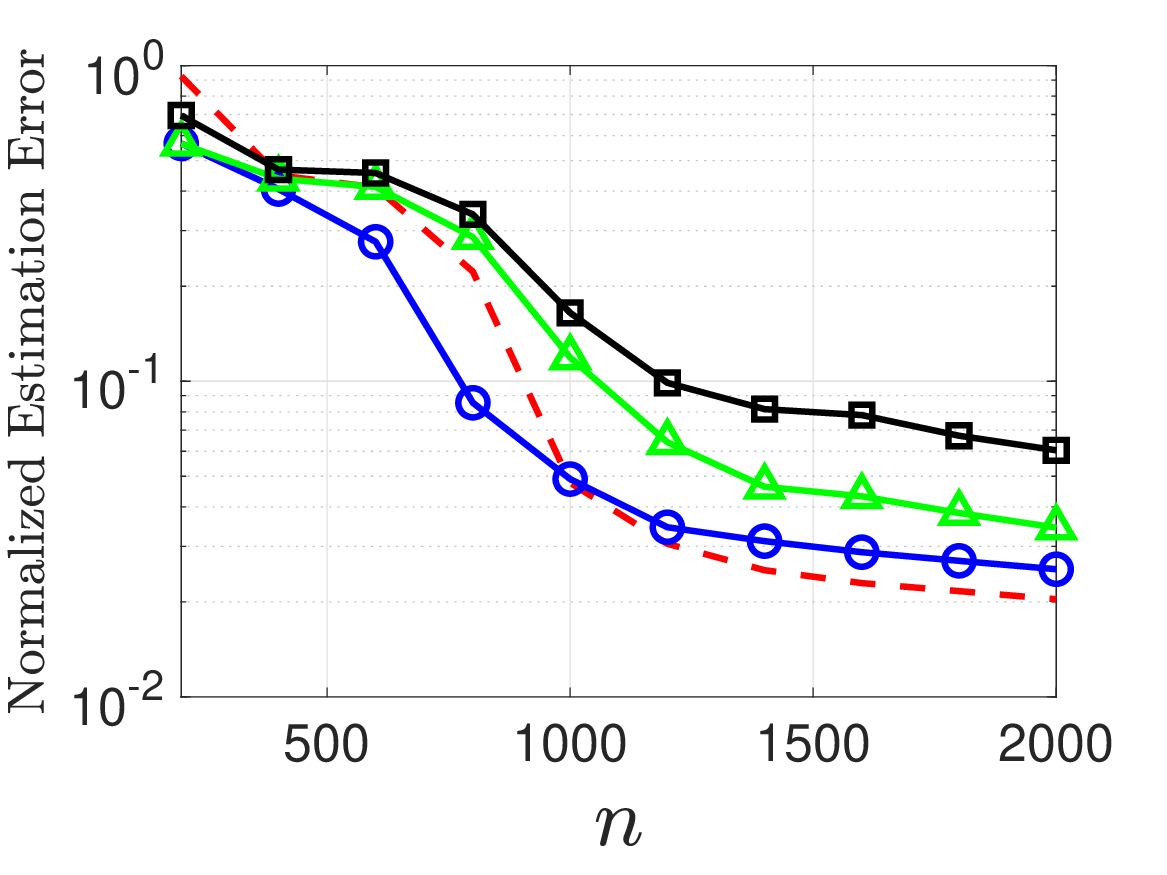}}
\hfill
\caption{Estimation error versus the number of observations $n$ under Gaussian noise of variance $\sigma^2$ ($k=6$ and $p=30$): repeated random initialization (black line with square markers), AR (green line with triangle markers), iterative AR (blue line and circle markers), and AM (red dashed line). All methods start from the repeated random initialization. 
}
\label{fig:gaus_noise}
\end{figure*}

 
In practice, observations are often corrupted with noise. Next, we study the estimation under two noise models. In these experiments, the ground-truth parameter vectors are i.i.d $\mathrm{Normal}(\mb 0,\mb I_{kp})$. Furthermore, to deduce statistical performance, the median of the estimation error in $50$ trials is observed.
{
 \begin{figure}[ht]
    \centering    
    \begin{minipage}[b]{0.4\textwidth}
    \includegraphics[width=\textwidth]{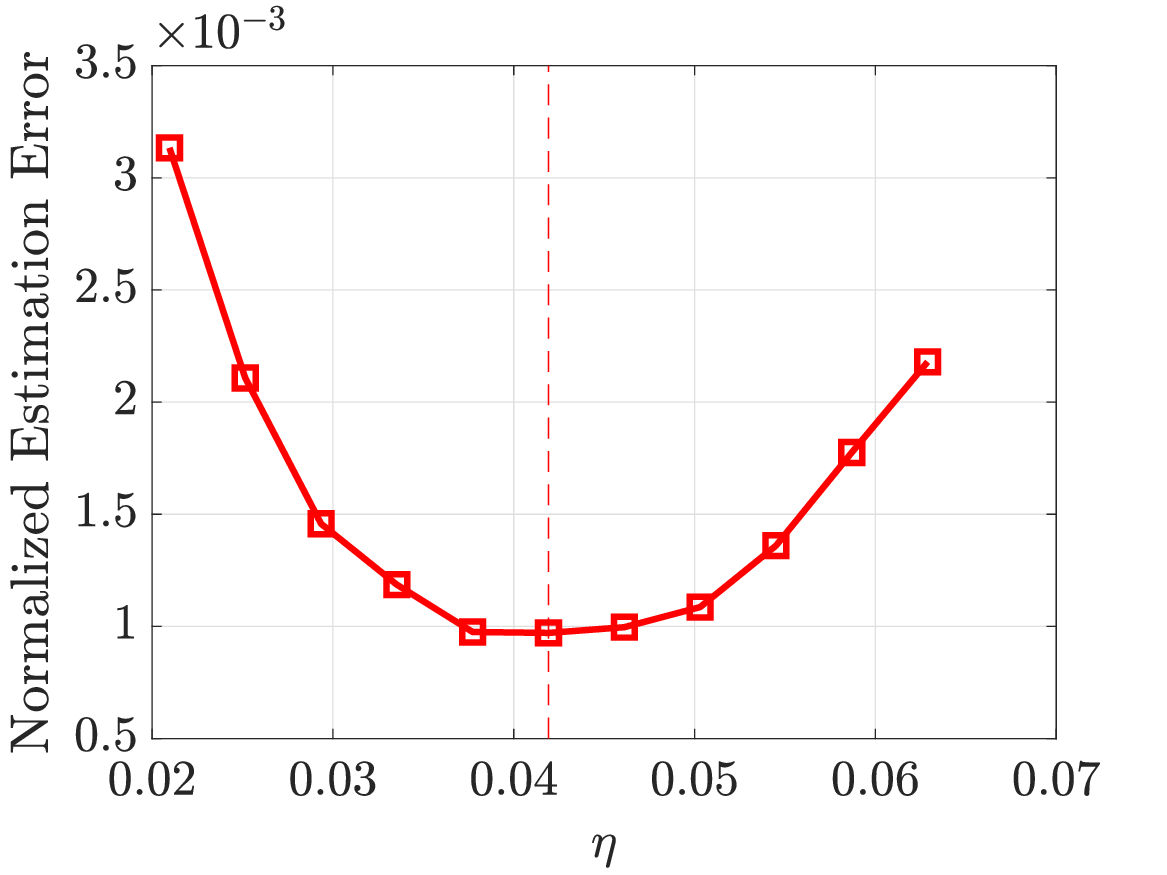}
    \end{minipage}
    \hfill 
    \begin{minipage}[b]{0.4\textwidth}
        \includegraphics[width=\textwidth]{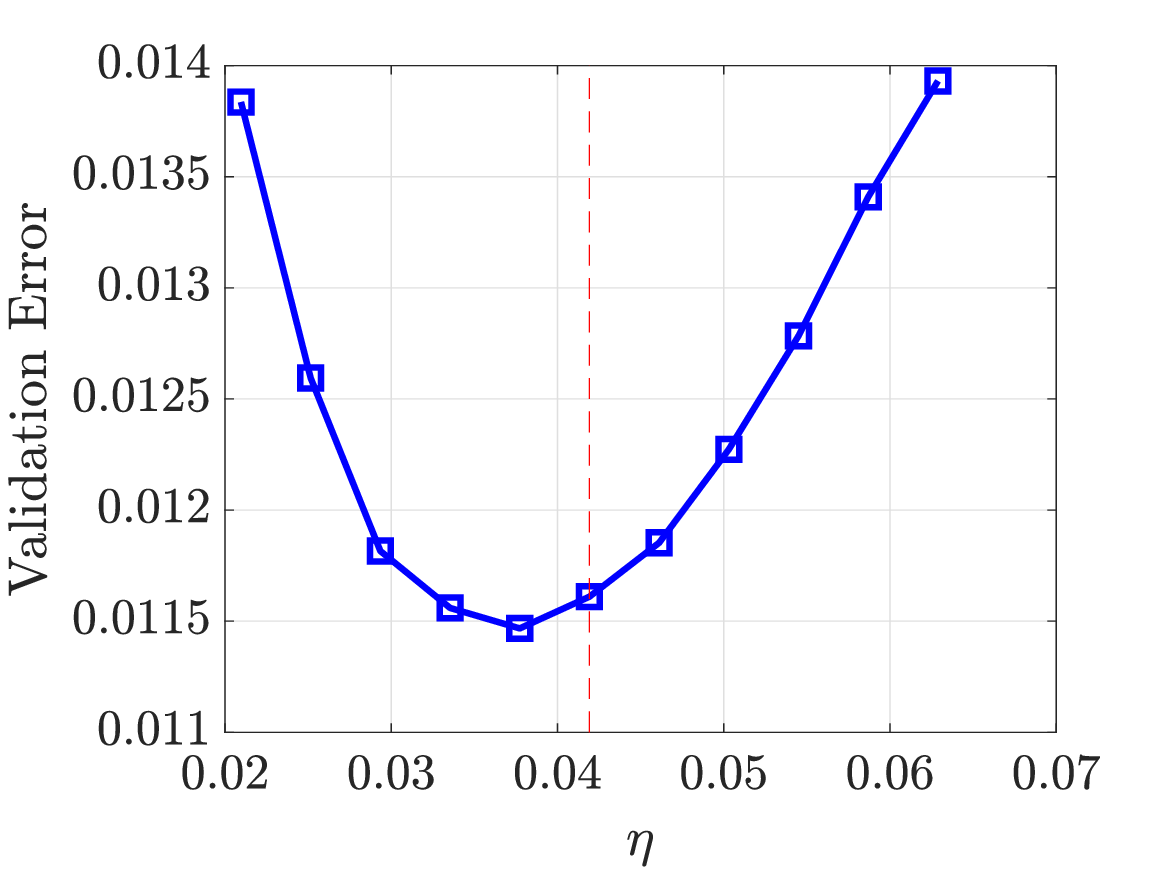}
    \end{minipage}
    
    \caption{Estimation error and validation error via cross-validation by AR for varying $\eta$ ($k=3, p=30,$ and $n=1,500$): The dotted vertical line indicates the location of $\eta_\star$ that achieves the equality in \eqref{eq:cond_eta}.}
    \label{fig:cross_val}
\end{figure}}

First, we consider the i.i.d. Gaussian noise model, i.e. $y_i=f_i(\mb \beta_{\star})+\epsilon_i$, where $\{\epsilon_i\}_{i=1}^n$ are i.i.d  following $\mathrm{Normal}(0,\sigma^2)$. 
To track the change of the estimation performance as a function of the noise strength, the dimension parameters are fixed as $p = 30$ and $k=6$. 
AM has shown to be consistent, with an error rate that vanishes as $n$ grows{\cite{ghosh2021max}}. 
Its empirical estimation error decays similarly in the experiment. 
However, we observe that AR has a larger estimation error compared to AM, which remains nontrivial even for large $n$. 
We conjecture that this bias term is due to the regularizer with an imperfect anchor vector. 
In fact, as the anchor vector is obtained from a more accurate initial estimate, the result estimation error decays accordingly. 
Motivated by this observation, we consider a modification of AR with further iterative refinements, which we call the \textit{iterative anchored regression} (IAR). 
The first iteration of IAR is equivalent to 
AR, but in the subsequent iterations, the anchor vector is refined by using the estimate from the previous iteration. The entire IAR algorithm is summarized in \Cref{ItrAR}. The number of iterations in IAR is set to $I_{\mathrm{IAR}}=40$. 
Figure~\ref{fig:gaus_noise} shows that with more iterations the performance of iterative AR becomes as good as that of AM. Moreover, for small $n$ (e.g. $n \leq 1,000$), IAR provides a smaller estimation error than AM.
{Moreover, we also study the sensitivity to the choice of the parameter $\eta$ in \eqref{eq:estimator}. 
The need to tune this parameter can be a weakness of AR since AM does not involve any such parameter. 
As shown in \Cref{fig:cross_val}, the estimation error by AR does not critically depend on $\eta$. In this experiment, we vary $\eta$ around $\eta_\star$ that achieves the equality in \eqref{eq:cond_eta} with $\pm 50\%$ margin. Within this range, the estimation error remains small. 
Also, note that the minimum estimation error is achieved when $\eta$ is slightly smaller than $\eta_\star$. 
It still remains to set the value of $\eta$ within this range. 
Since the observations are corrupted with i.i.d. noise in this experiment, we applied a 5-fold cross-validation to estimate the validation error. 
\Cref{fig:cross_val} suggests that choosing an $\eta$ value that yields the smallest prediction error will likely result in the smallest estimation error.}

 \begin{figure*}
\center
\hfill
\subfigure[$\varphi=0.1$]{
\includegraphics[scale=0.28]{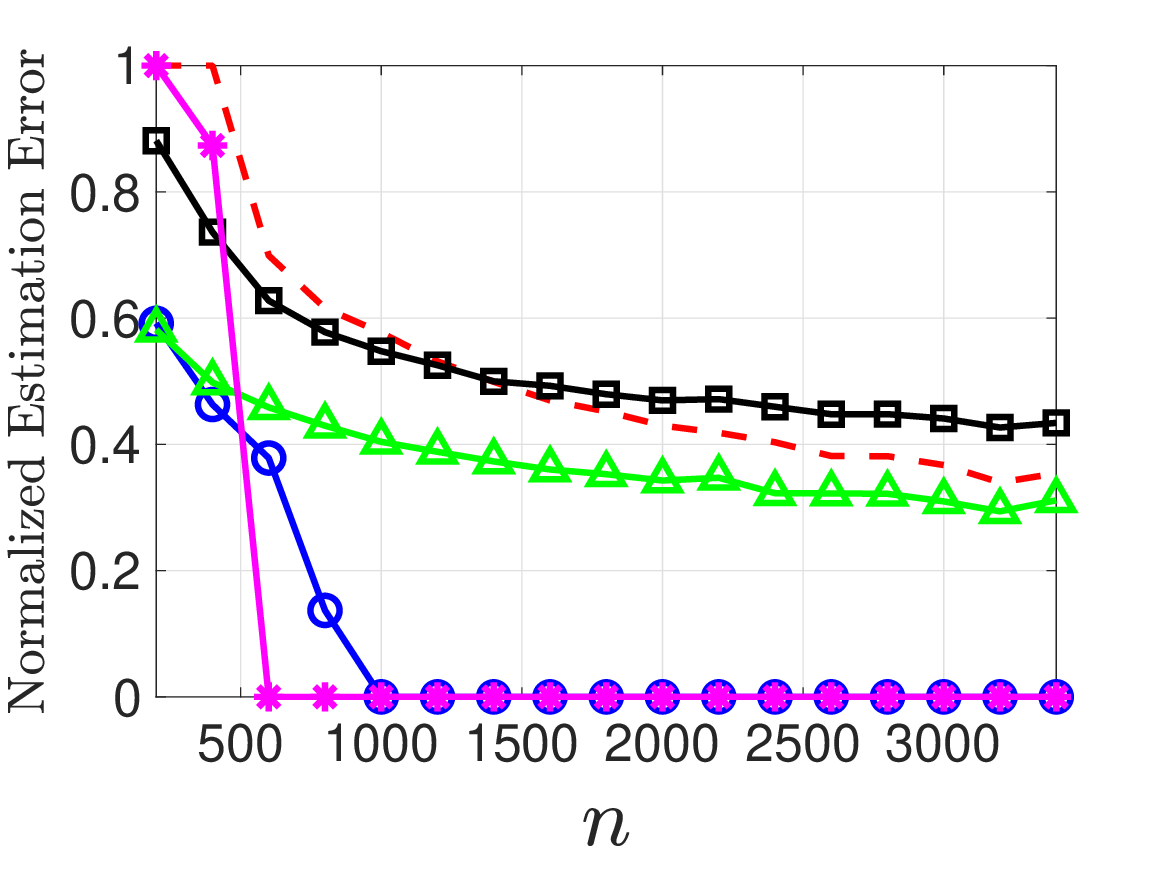}}
\hfill
\subfigure[$\varphi=0.2$]{
\includegraphics[scale=0.28]{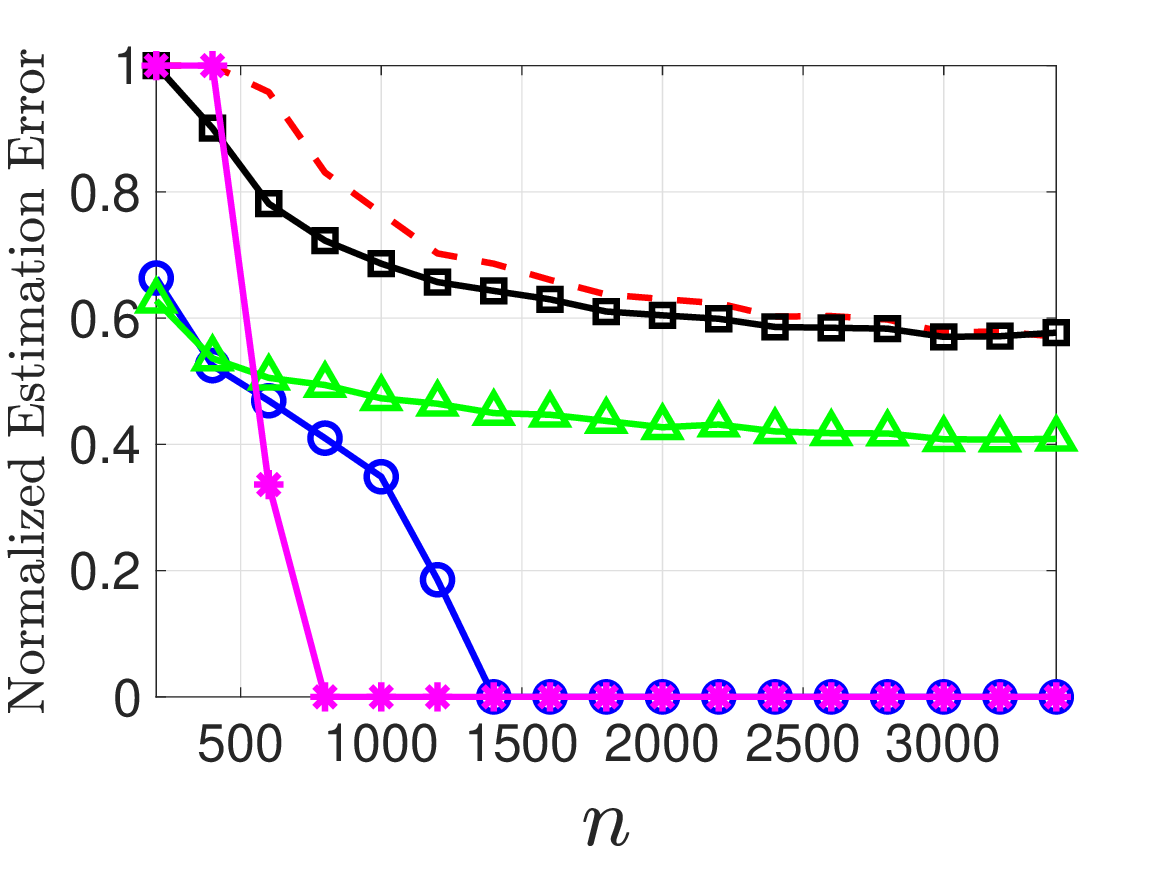}}
\hfill
\subfigure[$\varphi=0.3$]{
\includegraphics[scale=0.28]{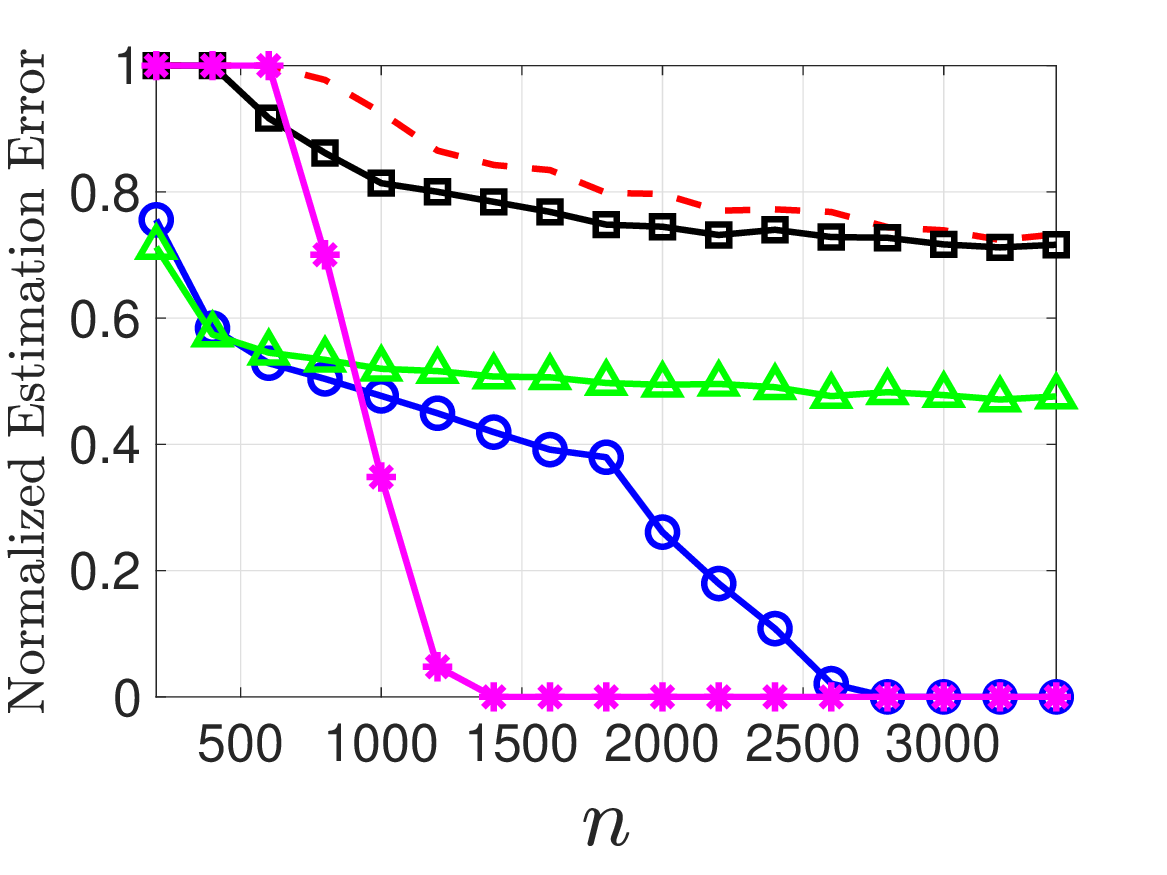}}
\hfill
\caption{Estimation error versus the number of observations $n$ under multiplicative Bernoulli noise model with probability $\varphi$ ($k=6$ and $p=30$): repeated random initialization (black line with square markers), AR (green line with triangle markers), IAR (blue line with circle markers), AM (red dashed line), {and AM-LAD (magenta line with asterisk markers)}. All methods start from repeated random initialization. 
}
\label{fig:mass_noise}
\end{figure*}


{
Next, we study the empirical performance of the estimators under a gross error model. 
In Section~\ref{sec:mainresult}, we have shown that the theoretical analysis of AR combined with the initialization by Ghosh et al. \cite{ghosh2021max} applies to this model.}
Specifically, each observation is corrupted by a sparse noise according to the multiplicative Bernoulli model with probability $\varphi$, that is, $\mathbb{P}\{y_i = -f_i(\mb \beta_\star)\} = \varphi$ and $\mathbb{P}\{y_i = f_i(\mb \beta_\star)\} = 1-\varphi$ for $i \in [n]$. The multiplicative Bernoulli noise model has a similarity with the Massart noise \cite[Definition 1.1]{diakonikolas2021relu}. 
{ 
Similar to the previous experiment, we compare AR to IAR and AM. 
Furthermore, we also study the performance of a variation of AM in which the least squares update is substituted by LAD. It will be denoted by AM-LAD.
}
\Cref{fig:mass_noise} illustrates the estimation error in this setting where $p=30$ and $k=6$. 
Unlike the case of Gaussian noise, AR outperforms AM in the presence of multiplicative Bernoulli noise. { 
Furthermore, IAR and AM-LAD achieve exact recovery over the range of $\varphi$ in this experiment. 
}

\section{Proof of Theorem~\ref{thm:main}}
We prove Theorem~\ref{thm:main} in two steps. First, in the following proposition, we present a sufficient condition for stable estimation by convex program in \eqref{eq:estimator}. 
Then we derive an upper bound on $\varrho$ in the proposition, which provides the sample complexity condition along with the corresponding error bound in Theorem~\ref{thm:main}.

\begin{prop}
\label{prop:main}
Under the hypothesis of Theorem~\ref{thm:main}, suppose that $\tilde{\mb \beta}$ satisfies
\begin{align}
\varrho & := \inf_{\begin{subarray}{l} j \in [k] \\ \mb w \in \mbb S^{p-1} \end{subarray}} \E \, \bbone_{\mc C_j}(\mb g) \left| \left\langle \mb g, \mb w \right\rangle \right| 
- \sup_{\begin{subarray}{l} j \in [k] \\ \mb w \in \mbb S^{p-1} \end{subarray}} \E \bbone_{\tilde{\mc C}_{j}\setminus \mc C_j}(\mb g) \langle \mb g, \mb w \rangle_+\nonumber\\
&\,\,\,\,\,\,- \sup_{\begin{subarray}{l} j \in [k] \\ \mb w \in \mbb S^{p-1} \end{subarray}} \E \bbone_{\mc C_j \setminus \tilde{\mc C}_{j}}(\mb g) \langle \mb g, \mb w \rangle_+ > 0\,.
\label{eq:def_varrho}
\end{align}
Then there exists an absolute constant $c>0$ such that if
\begin{equation}
\label{prop:samplecom_noise}
n \geq c \varrho^{-2} \left( 4p \log^3p \log^5k + 4\log(\delta^{-1}) \log k \right)\,,
\end{equation}
then the solution $\hat{\mb \beta}$ to the optimization problem in \eqref{eq:estimator} obeys
\begin{equation}
\label{prop:err_bnd}
\sum_{j=1}^{k}\|\mb \beta_{\star,j}-\hat{\mb \beta}_j\|_2 \leq \frac{2}{\varrho n}\sum_{i=1}^{n}|w_i|
\end{equation}
with probability $1 - \delta$.
\end{prop}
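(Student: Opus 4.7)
The plan is to first establish a purely deterministic inequality controlling the error $\mb h \defeq \hat{\mb \beta} - \mb \beta_\star$ in terms of the noise, and only afterwards deal with the random direction $\mb h$ via uniform concentration. Since $(f_i(\mb \beta_\star) - y_i)_+ = (-w_i)_+$, the prescribed $\eta$ makes $\mb \beta_\star$ feasible in \eqref{eq:estimator}, so by optimality of $\hat{\mb \beta}$ one has $\langle \mb \theta, \mb h\rangle \geq 0$. Combining feasibility of $\hat{\mb \beta}$ with the sublinearity bound $(a - b)_+ \geq a_+ - b_+$, applied pointwise with $a = f_i(\hat{\mb \beta}) - f_i(\mb \beta_\star)$ and $b = w_i$, gives $\frac{1}{n}\sum_i (f_i(\hat{\mb \beta}) - f_i(\mb \beta_\star))_+ \leq \frac{1}{n}\sum_i |w_i|$. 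Convexity of $f_i$ yields $f_i(\hat{\mb \beta}) - f_i(\mb \beta_\star) \geq \langle \nabla f_i(\mb \beta_\star), \mb h\rangle$, so the left-hand side dominates $\frac{1}{n}\sum_i (\langle \nabla f_i(\mb \beta_\star), \mb h\rangle)_+$.

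\textbf{Matching the empirical quantity to $\varrho$.} Using the identity $(t)_+ = \tfrac{1}{2}(|t| + t)$, the definition $\mb \theta = \frac{1}{2n}\sum_i \nabla f_i(\tilde{\mb \beta})$, and $\langle \mb \theta, \mb h\rangle \geq 0$, one arrives at the deterministic inequality
\begin{equation*}
\frac{1}{2n}\sum_{i=1}^{n} |\langle \nabla f_i(\mb \beta_\star), \mb h\rangle| - \frac{1}{2n}\sum_{i=1}^{n} (\langle \nabla f_i(\tilde{\mb \beta}) - \nabla f_i(\mb \beta_\star), \mb h\rangle)_+ \leq \frac{1}{n}\sum_{i=1}^{n} |w_i|.
\end{equation*}
Since $\nabla f_i(\mb \beta_\star)$ places $\mb x_i$ in the block indexed by the cone $\mc C_j$ containing $\mb x_i$ and is zero elsewhere, the first sum equals $\sum_j \bbone_{\mc C_j}(\mb x_i)|\langle \mb x_i, \mb h_j\rangle|$; decomposing $\langle \nabla f_i(\tilde{\mb \beta}) - \nabla f_i(\mb \beta_\star), \mb h\rangle$ on the refined partition $\{\mc C_j \cap \tilde{\mc C}_{j'}\}$ and applying $(A - B)_+ \leq A_+ + (-B)_+$ shows that the second sum is controlled by contributions supported on the symmetric differences $\tilde{\mc C}_j \setminus \mc C_j$ and $\mc C_j \setminus \tilde{\mc C}_j$. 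Taking expectations for fixed $\mb h$ and passing to the infimum/supremum over unit directions block by block identifies the expected value of the left-hand side with exactly $\tfrac{\varrho}{2}\sum_{j=1}^k \|\mb h_j\|_2$, where $\varrho$ is the quantity in \eqref{eq:def_varrho}.

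\textbf{Uniform concentration (the main obstacle).} The core difficulty is upgrading this expectation-level bound to one that holds for the random, data-dependent direction $\mb h$. Because both empirical functionals are positively homogeneous of degree one in $\mb h$, it suffices to control their deviations from expectation uniformly over the block unit sphere $\{\mb h : \sum_j \|\mb h_j\|_2 = 1\}$. The plan is to treat the two functionals as empirical processes indexed by $(j, \mb w)$, where each summand is a product of (i) an indicator of a polyhedron defined by the $O(k)$ linear inequalities that determine $\{\mc C_j\}$ and $\{\tilde{\mc C}_j\}$, and (ii) a scalar that is linear in $\mb x_i$ and in $\mb h_j$. Combining standard covering-number estimates for this polyhedral index class with a truncation plus Bernstein-type bound for sub-exponential summands, a chaining argument over the unit sphere in $\mbb R^p$, and a union bound over the $O(k^2)$ pairs $(j, j')$ and $k$ blocks of $\mb h$, is expected to yield a uniform deviation of order $\sqrt{(p\log^3 p \log^5 k + \log(\delta^{-1})\log k)/n}$ with probability $1 - \delta$, reproducing the sample complexity \eqref{prop:samplecom_noise}. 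Choosing $c$ large enough forces this deviation below $\varrho/4$; inserting into the deterministic inequality and rearranging delivers the claimed bound $\sum_j \|\mb \beta_{\star,j} - \hat{\mb \beta}_j\|_2 \leq \frac{2}{\varrho n}\sum_i |w_i|$ after absorbing constants into $c$.
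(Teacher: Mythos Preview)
Your deterministic reduction is exactly the paper's argument: feasibility of $\mb\beta_\star$, optimality of $\hat{\mb\beta}$, convexity of $f_i$, and the identity $(t)_+=\tfrac12(|t|+t)$ combined with the anchor definition \eqref{eq:anchorvector} give precisely the decomposition in \eqref{eq:lowerbound_Lz}, and your splitting on the refined cells $\mc C_j\cap\tilde{\mc C}_{j'}$ with $(A-B)_+\le A_++(-B)_+$ is exactly how the paper passes from $Q_{\mb z}$ to $Q'_{\mb z}+Q''_{\mb z}$ in the proof of Lemma~\ref{lem:lhs}. So the skeleton matches.

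The concentration step is where your proposal diverges, and as written it is a sketch rather than a proof, with a few misdirections. There is no ``polyhedral index class'' to cover: the cones $\mc C_j,\tilde{\mc C}_j$ are fixed sets, not varying with the process index, so VC-type machinery for the indicators is unnecessary. The summands $\bbone_{\mc A_j}(\mb x_i)\langle\mb x_i,\mb w\rangle_+$ are dominated by $|\langle\mb x_i,\mb w\rangle|$ and hence subgaussian, so no truncation or Bernstein step is needed either. The paper's actual argument (Lemma~\ref{lem:concentration_Uz}) is sharper and more direct: it exploits disjointness of the $\mc A_j$ to bound $|U_{\mb z}-U_{\mb z'}|\le\frac1n\sum_i\max_j|\langle\mb x_i,\mb z_j-\mb z'_j\rangle|$, extracts a $\sqrt{\log k}$ from the max via Lemma~\ref{lem:subgnormax}, and then runs Dudley over $B_{1,2}$ in the $\ell_\infty^k(\ell_2^p)$ metric, with the entropy integral handled by Maurey's empirical method through the embedding $B_{1,2}\subset\sqrt p\,B_1$. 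That is precisely where the $\log^3 p\log^5 k$ factors originate.

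Your alternative route---reduce to $k$ single-block problems by separability and positive homogeneity, chain each over $\mbb S^{p-1}$, then union-bound over $j$---is in fact valid (since $V_{\mb z},Q'_{\mb z},Q''_{\mb z}$ decouple across blocks one has $\sup_{B_{1,2}}|\cdot-\E\cdot|\le\max_j\sup_{\mbb S^{p-1}}|\cdot|$), and would arguably shave some logarithms. But you would need to actually carry it out; the proposal as stated does not.
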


\begin{IEEEproof}
{
We first show that there exists a constant $c > 0$ such that the condition in \eqref{eq:initial_condition} implies $\zeta>0$. Hence, we consider 
\begin{equation}
\label{eq:alt_initial_condition}
 \underbrace{\min_{j\in[k]}
\sqrt{\frac{\pi}{32}} \, \P^2\{\mb g\in {\mc C_j}\}}_{\mathrm{(i)}}-\underbrace{2\max_{j\in[k]}  \sqrt{\P\{\mb g\in\tilde{\mc C}_j\triangle{\mc C_j}\}}}_{\mathrm{(ii)}} > 0.
\end{equation}
It follows from the definition of $\pi_{\min}$ that (i) in \eqref{eq:alt_initial_condition} is bounded from below as
\begin{equation}
\label{eq:lwb_a}
\mathrm{(i)} \geq \sqrt{\frac{\pi}{32}}\pi_{\min}^2.
\end{equation}
It only remains to find an appropriate upper bound on (ii). 
Since $\{\mc{C}_j\}_{j=1}^k$ consists of disjoint sets (except their boundaries corresponding to sets of measure zero), for a fixed $j \in [k]$, the symmetric difference between $\tilde{\mc{C}}_j$ and $\mc{C}_j$ is written as
\[
\tilde{\mc{C}}_j\triangle{\mc C_j}=\left(\cup_{j'\neq j}\tilde{\mc{C}}_j\cap \mc{C}_{j'}\right)\cup\left(\cup_{j'\neq j}\mc{C}_j\cap\tilde{\mc C}_{j'}\right).
\]
Therefore, we obtain 
\begin{equation}
\label{eq:ub_b1}
\mathrm{(ii)} \leq 2 \sqrt{2k} \, \max_{j \in [k]} \max_{j' \in [k] \setminus \{j\}} \sqrt{\P\left(\mb g\in\tilde{\mc{C}}_j\cap{\mc{C}}_{j'}\right)}.
\end{equation}
Moreover, since
\begin{equation}
\begin{aligned}
\mb g\in \tilde{\mc{C}}_j\cap \mc{C}_{j'}
&\implies \mb g^\T \tilde{\mb \beta}_j \geq \mb x_i^\T \tilde{\mb \beta}_{j'}, ~ \mb x_i^\T \mb \beta_{\star,j'} \geq \mb x_i^\T \mb \beta_{\star,j} \\
&\implies \mb g^\T  (\tilde{\mb \beta}_j-\tilde{\mb \beta}_{j'} )\geq 0, ~ \mb g^\T(\mb \beta_{\star,j}-\mb \beta_{\star,j'})\leq 0 \\
&\implies \mb g^\T  (\tilde{\mb \beta}_j-\tilde{\mb \beta}_{j'} )\cdot \mb g^\T(\mb \beta_{\star,j}-\mb \beta_{\star,j'})\leq 0,
\end{aligned}
\label{eq:incursion}
\end{equation}
with \cite[Lemma~9]{ghosh2021max}, $\mathrm{(ii)}$ in \eqref{eq:ub_b1} is further upper-bounded by
\begin{equation}
\label{eq:b_bnd1}
\begin{aligned}
&\mathrm{(ii)}\\
&\leq 2\sqrt{2 k}\cdot\\
&\quad \max_{j \in [k]}\max_{j' \in [k] \setminus \{j\}} \sqrt{\P\left(\mb g^\T  (\tilde{\mb \beta}_j-\tilde{\mb \beta}_{j'}) \cdot \mb g^\T(\mb \beta_{\star,j}-\mb \beta_{\star,j'}) \leq 0\right)} \\
&\leq C \sqrt{k}\cdot\max_{j \in [k]} \max_{j' \in [k] \setminus \{j\}}\Bigg(\sqrt{\frac{\|(\tilde{\mb \beta}_j-\tilde{\mb \beta}_{j'})-({\mb \beta}_{\star,j}-{\mb \beta}_{\star,j'})\|_2}{\|{\mb \beta}_{\star,j}-{\mb \beta}_{\star,j'}\|_2}}\\
&\quad\cdot{\log^{1/4}\left(\frac{2\|{\mb \beta}_{\star,j}-{\mb \beta}_{\star,j'}\|_2}{\|(\tilde{\mb \beta}_j-\tilde{\mb \beta}_{j'})-({\mb \beta}_{\star,j}-{\mb \beta}_{\star,j'})\|_2}\right)}\Bigg),
\end{aligned}
\end{equation}
for an absolute constant $C > 0$. 
Then, by plugging in \eqref{eq:lwb_a} and \eqref{eq:b_bnd1} to \eqref{eq:alt_initial_condition}, we obtain a sufficient condition for \eqref{eq:alt_initial_condition} as
\begin{equation}
\label{eq:alt_initial_condition2}
\begin{aligned}
&C \sqrt{k} \max_{j \in [k]} \max_{j' \in [k] \setminus \{j\}} \sqrt{\frac{\|(\tilde{\mb \beta}_j-\tilde{\mb \beta}_{j'})-({\mb \beta}_{\star,j}-{\mb \beta}_{\star,j'})\|_2}{\|{\mb \beta}_{\star,j}-{\mb \beta}_{\star,j'}\|_2}}\cdot\\
&{\log^{1/4}\left(\frac{2\|{\mb \beta}_{\star,j}-{\mb \beta}_{\star,j'}\|_2}{\|(\tilde{\mb \beta}_j-\tilde{\mb \beta}_{j'})-({\mb \beta}_{\star,j}-{\mb \beta}_{\star,j'})\|_2}\right)} < \sqrt{\frac{\pi}{32}}\pi_{\min}^2.
\end{aligned}
\end{equation}
For a fixed $j' \in [k] \setminus \{j\}$, let 
\[
a = \frac{\|(\tilde{\mb \beta}_j-\tilde{\mb \beta}_{j'})-({\mb \beta}_{\star,j}-{\mb \beta}_{\star,j'})\|_2}{\|{\mb \beta}_{\star,j}-{\mb \beta}_{\star,j'}\|_2} 
\quad \text{and} \quad 
b = \frac{\pi_{\min}^4}{k}.
\]
Since $a,b\in(0,0.1]$ and $a\leq\frac{b}{2}\log^{-1/2}(1/b)$ imply $a\log^{1/2}(2/a)\leq b$, if one chooses $c$ in \eqref{eq:initial_condition} so that $c < \frac{\pi}{32 C^2}$, then \eqref{eq:initial_condition} implies \eqref{eq:alt_initial_condition2} for all distinct $j,j' \in [k]$. In the remainder of the proof, we will assume that \eqref{eq:alt_initial_condition} holds.}

We show that, for a sufficiently large $\rho > 0$, the following three conditions cannot hold simultaneously: 
\begin{align}
    & \frac{1}{n} \sum_{i=1}^n \left(f_i(\mb \beta_{\star}+\mb z) - y_i\right)_+ \leq \eta\,, \label{eq:proof:prop1:cond1} \\
    &\norm{\mb z}_{1,2} > \rho\,, \label{eq:proof:prop1:cond2} \\
    & \langle \mb \theta, \mb z \rangle \geq 0\,. \label{eq:proof:prop1:cond3}
\end{align}
Therefore, assuming \eqref{eq:proof:prop1:cond2} and \eqref{eq:proof:prop1:cond3} hold, it suffices to show 

\begin{equation}
\mc L(\mb z) := \frac{1}{n} \sum_{i=1}^n \left(f_i(\mb \beta_{\star}+\mb z) - y_i\right)_+ > \eta\,. \label{eq:proof:prop1:cond1:neg}
\end{equation}
To this end, we derive a lower bound on $\mc L(\mb z)$ as follows:
\begin{align}
\mc L(\mb z) 
&\geq \frac{1}{n}\sum_{i=1}^{n} \left(f_i(\mb \beta_{\star}+\mb z) - f_i(\mb \beta_\star)\right)_+ - \frac{1}{n} \sum_{i=1}^n (w_i)_+ \nonumber \\
&\overset{\mathrm{(a)}}{\geq} \frac{1}{n}\sum_{i=1}^{n} \left( \left\langle \nabla f_i(\mb \beta_\star), \mb z \right\rangle \right)_+ - \frac{1}{n} \sum_{i=1}^n (w_i)_+ \nonumber \\
&= \frac{1}{n}\sum_{i=1}^{n} \frac{\left|\left\langle \nabla f_i(\mb \beta_\star), \mb z \right\rangle \right|}{2} + \frac{1}{n}\sum_{i=1}^{n} \frac{\left\langle \nabla f_i(\mb \beta_\star), \mb z \right\rangle}{2}\nonumber \\
&\quad- \frac{1}{n} \sum_{i=1}^n (w_i)_+ \nonumber \\
&= \frac{1}{n}\sum_{i=1}^{n} \frac{\left|\left\langle \nabla f_i(\mb \beta_\star), \mb z \right\rangle \right|}{2} + \frac{1}{n}\sum_{i=1}^{n} \frac{\left\langle \nabla f_i(\mb \beta_\star), \mb z \right\rangle}{2} \nonumber\\
&\quad- \langle \mb \theta, \mb z \rangle+ \langle \mb \theta, \mb z \rangle - \frac{1}{n} \sum_{i=1}^n (w_i)_+ \nonumber \\
&\overset{\mathrm{(b)}}{=}
\langle \mb \theta, \mb z \rangle - \frac{1}{n} \sum_{i=1}^n (w_i)_+ + \frac{1}{n} \sum_{i=1}^{n} \frac{\left|\left\langle \nabla f_i(\mb \beta_\star), \mb z \right\rangle \right|}{2}\nonumber\\
&\quad+ \frac{1}{n}\sum_{i=1}^{n} \frac{\langle \nabla f_i(\mb \beta_\star) - \nabla f_i(\tilde{\mb \beta}), \mb z \rangle}{2} \nonumber \\
&\overset{\mathrm{(c)}}{=}
\langle \mb \theta, \mb z \rangle - \frac{1}{n} \sum_{i=1}^n (w_i)_+ + \frac{1}{n}\sum_{i=1}^{n}\sum_{j=1}^{k}\frac{\bbone_{\mc C_j}(\mb x_i)|\langle\mb x_i,\mb z_j\rangle|}{2}\nonumber\\
& \quad + \frac{1}{n} \sum_{i=1}^n \sum_{j=1}^k 
\frac{ \{
\bbone_{\mc C_j}(\mb x_i) - \bbone_{\tilde{\mc C}_j}(\mb x_i)
\} \langle \mb x_i, \mb z_j \rangle}{2} \label{eq:lowerbound_Lz}
\,,
\end{align}
where (a) holds by the convexity of $f_i$, which implies
\[
f_i(\mb \beta_{\star}+\mb z)\geq f_i(\mb \beta_{\star})+\langle\nabla{f_i(\mb \beta_{\star})},\mb z\rangle\,,
\]
(b) follows from \eqref{eq:anchorvector}, and (c) is obtained by calculating $\nabla f_i(\mb \beta)$ at $\mb \beta = \mb \beta_\star$ and $\mb \beta = \tilde{\mb \beta}$. 
We further proceed by obtaining lower bounds on the last two terms in \eqref{eq:lowerbound_Lz} by the following lemmas, which are proved in Appendices~\ref{sec:proof:lem:rhs} and \ref{sec:proof:lem:lhs}.

\begin{lem}
\label{lem:rhs}
Let $(V_{\mb z})_{\mb z \in \mbb R^{kp}}$ be a random process defined by 
\[
V_{\mb z} := \frac{1}{n} \sum_{i=1}^n \sum_{j=1}^k \bbone_{\mc C_j}(\mb x_i) \left| \langle \mb x_i, \mb z_j \rangle \right|\,,
\]
where $\mb x_1,\dots,\mb x_n$ are i.i.d. $\mathrm{Normal}(\mb 0,\mb I_p)$. Then, for $\mb g \sim \mathrm{Normal}(\mb 0,\mb I_p)$ and any $\delta\in(0,1)$, there exists an absolute constant $c_1 > 0$ such that
\begin{align*}
\underline{V} := \inf_{\norm{\mb z}_{1,2} = 1} V_{\mb z}
\geq &\min_{j \in [k], \mb w \in \mbb S^{p-1}} \E \, \bbone_{\mc C_j}(\mb g) \left| \left\langle \mb g, \mb w \right\rangle \right|\\
&- c_1 \left(\frac{p \log^3p \log^5k + \log(\delta^{-1}) \log k}{n}\right)^{1/2}\,    
\end{align*}
holds with probability at least $1-\delta/2$.
\end{lem}
\begin{IEEEproof}
See \Cref{sec:proof:lem:rhs}.
\end{IEEEproof}

\begin{lem}
\label{lem:lhs}
Let $(Q_{\mb z})_{\mb z \in B_{1,2}}$ be a random process defined by 
\[
Q_{\mb z} := \frac{1}{n} \sum_{i=1}^n \sum_{j=1}^k 
\left\{
\bbone_{\tilde{\mc C}_j}(\mb x_i) - \bbone_{\mc C_j}(\mb x_i)
\right\} \langle \mb x_i, \mb z_j \rangle\,,
\] 
where $\mb x_1,\dots,\mb x_n$ are i.i.d. $\mathrm{Normal}(\mb 0,\mb I_p)$. Then, for $\mb g \sim \mathrm{Normal}(\mb 0,\mb I_p)$ and any $\delta\in(0,1)$, there exists an absolute constant $c_2 > 0$ such that
\begin{align*}
\overline{Q} := \sup_{\norm{\mb z}_{1,2} = 1} Q_{\mb z} 
&\leq 
\max_{j \in [k], \mb w \in \mbb S^{p-1}} \E \bbone_{\tilde{\mc C}_{j}\setminus \mc C_j}(\mb g) \langle \mb g, \mb w \rangle_+\\ 
&+ \max_{j \in [k], \mb w \in \mbb S^{p-1}} \E \bbone_{\mc C_j \setminus \tilde{\mc C}_{j}}(\mb g) \langle \mb g, \mb w \rangle_+ \\
& + c_2\left(\frac{p \log^3p \log^5k + \log(\delta^{-1}) \log k}{n}\right)^{1/2}\,
\end{align*}
holds with probability at least $1-\delta/2$.
\end{lem}
\begin{IEEEproof}
See \Cref{sec:proof:lem:lhs}.
\end{IEEEproof}

Since $V_{\mb z}$ are $Q_{\mb z}$ are homogeneous in $\mb z$, we obtain that the third term in the right-hand side of \eqref{eq:lowerbound_Lz} is written as $V_{\mb z}$ and lower-bounded by
\begin{equation}
\label{eq:defVz}
\frac{V_{\mb z}}{2} \geq \frac{\underline{V} \norm{\mb z}_{1,2}}{2}\,.
\end{equation}
Similarly, the last term in the right-hand side of \eqref{eq:lowerbound_Lz} is written as $- Q_{\mb z}$ and lower-bounded by
\begin{equation}
\label{eq:defQz}
- \frac{Q_{\mb z}}{2} \geq - \frac{\overline{Q} \norm{\mb z}_{1,2}}{2}\,.
\end{equation}
Furthermore, by Lemmas~\ref{lem:rhs} and \ref{lem:lhs}, the condition in \eqref{prop:samplecom_noise} implies that
\begin{equation}
\label{eq:proof:prop1:varrho}
\underline{V} - \overline{Q} \geq {c_3}\varrho > 0
\end{equation}
holds with probability $1-\delta$ { for an absolute constant $c_3>0$}.  
Then we choose $\rho$ so that it satisfies
{
\[
\rho = \frac{2}{\underline{V} - \overline{Q}} \cdot \left(\eta+\frac{1}{n} \sum_{i=1}^n (w_i)_+\right)\,.
\]}
Next, by plugging in the above estimates to \eqref{eq:lowerbound_Lz}, we obtain that, under the event in \eqref{eq:proof:prop1:varrho}, the conditions in \eqref{eq:proof:prop1:cond2} and \eqref{eq:proof:prop1:cond3} imply 
\begin{align*}
\mc L(\mb z) 
&\geq \langle\mb \theta, \mb z \rangle - \frac{1}{n} \sum_{i=1}^n (w_i)_+ + \frac{(\underline{V} -\overline{Q})\norm{\mb z}_{1,2}}{2} \\
&> - \frac{1}{n} \sum_{i=1}^n (w_i)_+ + \frac{(\underline{V}-\overline{Q})\rho}{2}\\
&={ -\frac{1}{n} \sum_{i=1}^n (w_i)_+ + \frac{1}{n} \sum_{i=1}^n (w_i)_+ +\eta } \\
&={ \eta\,.}
\end{align*}
This lower bound implies \eqref{eq:proof:prop1:cond1:neg}. 
Therefore we have shown that the three conditions in \eqref{eq:proof:prop1:cond1}, \eqref{eq:proof:prop1:cond2}, and  \eqref{eq:proof:prop1:cond3} cannot hold simultaneously. 
It remains to apply the claim to a special case. 

Let $\hat{\mb z} = \hat{\mb \beta} - \mb \beta_\star$. 
Recall that both $\hat{\mb \beta}$ and $\mb \beta_\star$ are feasible for the optimization problem in \eqref{eq:estimator}. 
Moreover, since $\hat{\mb \beta}$ is the maximizer, it follows that $\langle \mb \theta, \hat{\mb \beta}\rangle \geq \langle \mb \theta, \mb \beta_\star \rangle$, which implies $\langle \mb \theta, \hat{\mb z}\rangle \geq 0$. 
Therefore the conditions in \eqref{eq:proof:prop1:cond1} and \eqref{eq:proof:prop1:cond3} are satisfied with $\mb z$ substituted by $\hat{\mb z}$. 
Since the three conditions cannot be satisfied simultaneously, the condition in \eqref{eq:proof:prop1:cond2} cannot hold, i.e. $\hat{\mb z}$ satisfies
{
\begin{equation}
\label{eq:final_estbnd}
\norm{\hat{\mb z}}_{1,2} 
\leq \rho \leq { \frac{2}{\varrho}\left(\eta+\frac{1}{n}\sum_{i=1}^n (w_i)_+\right)}\,.
\end{equation}
Since the noise vector $\mb w$ was arbitrary, \eqref{eq:final_estbnd} holds for any $\mb w$. 
Furthermore, since the random processes in \Cref{lem:rhs} and \Cref{lem:lhs} do not depend on the noise $\mb w$, the conclusion of the theorem applies to an adversarial noise without amplifying the error probability. 
}
\end{IEEEproof}

Next, we use the following lemma to obtain a lower bound on $\varrho$ in \eqref{eq:def_varrho}. 

\begin{lem}
\label{lem:lowbound_expect}
Let $\mc A \subset \mbb R^p$ be of finite Gaussian measure and $\mb g \sim \mathrm{Normal}(\mb 0, \mb I_p)$. 
Then we have
\[
\inf_{\begin{subarray}{l} \mb w \in \mbb S^{p-1} \end{subarray}} \E \, \bbone_{\mc A}(\mb g) \left| \left\langle \mb g, \mb w \right\rangle \right| 
\geq \sqrt{\frac{\pi}{32}} \, \P^2\{\mb g\in {\mc A}\}\,
\]
and
\[
\sup_{\begin{subarray}{l} \mb w \in \mbb S^{p-1} \end{subarray}} \E \bbone_{\mc A}(\mb g) \langle \mb g, \mb w \rangle_+\leq\sqrt{\P\{\mb g\in \mc A\}}\,.
\]
\end{lem}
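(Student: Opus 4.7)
My plan is to prove the two inequalities separately, both by elementary probabilistic arguments exploiting the fact that for any $\mb w \in \mbb S^{p-1}$, the projection $g_1 \defeq \langle \mb g, \mb w\rangle$ is standard normal, independent of the choice of direction $\mb w$ (the joint distribution with the orthogonal complement does depend on $\mb w$, but the marginal of $g_1$ does not). This decouples the direction-dependent piece from the norm of $\mb w$ and lets me work with one-dimensional marginals of the standard normal.

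For the upper bound (second inequality), I would apply Cauchy--Schwarz directly:
\[
\E\,\bbone_{\mc A}(\mb g) \langle \mb g, \mb w\rangle_+
\leq \sqrt{\E\,\bbone_{\mc A}(\mb g)^2} \cdot \sqrt{\E\, \langle \mb g, \mb w\rangle_+^2}
= \sqrt{\P\{\mb g \in \mc A\}} \cdot \sqrt{\E\, (g_1)_+^2}\,.
\]
Since $g_1 \sim \mathrm{Normal}(0,1)$, the second factor equals $\sqrt{1/2} \leq 1$, which is uniform in $\mb w$ and yields the claim. (In fact one gets the slightly tighter constant $\sqrt{1/2}$, but $1$ suffices.)

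For the lower bound (first inequality), the core trick is a truncation argument tuned to the measure of $\mc A$. For any $a>0$, Markov's inequality in reverse gives
\[
\E\,\bbone_{\mc A}(\mb g) |g_1|
\geq a\,\P\{\mb g \in \mc A,\ |g_1| \geq a\}
\geq a\bigl( \P\{\mb g\in \mc A\} - \P\{|g_1|<a\} \bigr)\,.
\]
Since $g_1\sim \mr{Normal}(0,1)$ has density bounded by $1/\sqrt{2\pi}$, the anti-concentration estimate $\P\{|g_1|<a\}\leq a\sqrt{2/\pi}$ is immediate. Writing $p=\P\{\mb g\in\mc A\}$, the resulting lower bound $a(p - a\sqrt{2/\pi})$ is a downward parabola in $a$, maximized at $a^\star = p\sqrt{\pi/8}/{}$, i.e., $a^\star = p\sqrt{\pi/8}$, where it takes the value $p^2 \sqrt{\pi/8}/2 = p^2\sqrt{\pi/32}$. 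Since the bound is uniform in $\mb w\in\mbb S^{p-1}$, taking the infimum yields the stated inequality.

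The steps are genuinely routine; the only thing requiring any thought is the choice of truncation level $a^\star$ so that the two competing effects (Markov loss vs.\ anti-concentration slack) balance and produce the constant $\sqrt{\pi/32}$. There is no serious obstacle, and no dimension-dependence appears because we have collapsed everything to the one-dimensional marginal $g_1$.
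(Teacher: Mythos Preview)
Your proposal is correct and follows essentially the same approach as the paper: the upper bound is obtained by the identical Cauchy--Schwarz step (the paper also records the tighter factor $\sqrt{1/2}$), and the lower bound is the same truncation argument---the paper introduces $S_\epsilon = \{|\langle \mb g,\mb w\rangle|<\epsilon\}$, applies the same anti-concentration bound $\P\{|\langle \mb g,\mb w\rangle|<\epsilon\}\le \epsilon\sqrt{2/\pi}$, and optimizes over $\epsilon$ at exactly your $a^\star=\sqrt{\pi/8}\,\P\{\mb g\in\mc A\}$.
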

\begin{IEEEproof}
For an arbitrarily fixed $\epsilon > 0$, let $S_\epsilon \subset \mbb R^p$ denote the set defined by
\[
S_\epsilon := \{\mb x\in\mbb R^p : |\langle \mb x, \mb w \rangle| < \epsilon\}\,.
\]
Then we have 
\begin{align}
     \E \bbone_\mc{C}(\mb g)|\inp{\mb g, \mb w}| & \ge \epsilon\,\E \bbone_\mc{C}(\mb g)\bbone_{\mc S_\epsilon^{\mr c}}(\mb g) \nonumber \\ 
     & =  \epsilon\,\E\left(\bbone_\mc{C}(\mb g)-\bbone_\mc{C}(\mb g)\bbone_{\mc S_\epsilon}(\mb g)\right) \nonumber \\
     & \ge  \epsilon\,\E\left(\bbone_\mc{C}(\mb g)-\bbone_{\mc S_\epsilon}(\mb g)\right) \nonumber \\
     & = \epsilon\,\left(\P\{\mb g \in \mc C\} - \P\{\mb g \in \mc S_\epsilon\}\right)\,. \label{eq:lb_ind_exp}
\end{align}
Moreover, since $\langle\mb g,\mb w\rangle\sim\mathrm{Normal}(0,1)$, $\P\{\mb g\in S_\epsilon\}$ is upper-bounded by
\begin{align}
\P\{\mb g\in S_\epsilon\}
&=\P\{|\langle\mb g,\mb w\rangle|<\epsilon\}
=\int_{-\epsilon}^{\epsilon} \frac{1}{\sqrt{2\pi}} e^{-u^2/2} du
\leq\epsilon\sqrt{\frac{2}{\pi}}\,.
\label{eq:ub_g_in_Sepsilon}
\end{align}
By plugging in \eqref{eq:ub_g_in_Sepsilon} to \eqref{eq:lb_ind_exp}, we obtain
\begin{equation}
\E \bbone_\mc{C}(\mb g)|\inp{\mb g, \mb w}| 
\ge \epsilon\left(\P\{\mb g\in \mc C\}-\epsilon\sqrt{\frac{2}{\pi}}\right)\,. \label{eq:lb_ind_exp2}
\end{equation}
Since the parameter $\epsilon>0$ was arbitrary, one can we maximize the right-hand side of \eqref{eq:lb_ind_exp2} with respect to $\epsilon$ to obtain the tightest lower bound. 
Note that the objective is a concave quadratic function and the maximum is attained at $\epsilon=\sqrt{{\pi}/{8}} \, \P\left\{\mb g\in \mc C\right\}$. 
This provides the lower bound in the first assertion. 
Next, by the Cauchy-Schwarz inequality, we obtain the upper bound in the second assertion as follows:
\begin{align*}
\E \bbone_{\mc A}(\mb g) \langle \mb g, \mb w \rangle_+
&\leq\sqrt{\E\left(\bbone_{\mc A}(\mb g)\right)^2}\sqrt{\E{\langle\mb g,\mb w\rangle_+^2}}\\
&=\sqrt{\E\bbone_{\mc A}(\mb g)}\sqrt{\frac{\E{\langle\mb g,\mb w\rangle^2}}{2}}\\
&=\sqrt{\frac{\P\{\mb g\in\mc A\}}{2}}\,.
\end{align*}
\end{IEEEproof}

Finally, by applying Lemma~\ref{lem:lowbound_expect} to each of the expectation terms in $\varrho$, we obtain a lower bound on $\varrho$ given by
\begin{align}
\varrho &\geq \min_{j\in[k]}\sqrt{\frac{\pi}{32}} \, \P^2\left\{\mb g\in \mc C_j \right\}-\max_{j\in[k]}\sqrt{\P\left\{\mb g\in {\mc C_j}\setminus\tilde{\mc C}_j\right\}}\nonumber\\
&\quad-\max_{j\in[k]}\sqrt{\P\{\mb g\in \tilde{\mc C}_j\setminus\mc C_j\}}\, \nonumber \\
&\geq\min_{j\in[k]}\sqrt{\frac{\pi}{32}} \, \P^2\left\{\mb g\in \mc C_j \right\}-2\max_{j\in[k]}\sqrt{\P\left\{\mb g\in {\mc C_j}\triangle\tilde{\mc C}_j\right\}}\,, \label{eq:proof_lowb1}
\end{align}
where the second inequality holds since $\tilde{\mc C}_j\triangle{\mc C_j} = (\tilde{\mc C}_j\setminus\mc C_j) \cup (\mc C_j\setminus\tilde{\mc C}_j)$ for all $j \in [k]$. This implies that \eqref{thm:samplecom_noise} is a sufficient condition for \eqref{prop:samplecom_noise}. Moreover, substituting $\varrho$ in \eqref{prop:err_bnd} by the lower bound in \eqref{eq:proof_lowb1} provides \eqref{thm:err_bnd}. This completes the proof of Theorem~\ref{thm:main}.

{
\subsection{Tightness of the lower bound on $\varrho$}
In \eqref{eq:proof_lowb1}, we obtain a lower bound on $\varrho$ by Lemma~\ref{lem:lowbound_expect}. We show through the following example that the lower bound is tight in terms of its dependence on $\P\left\{\mb g\in \mc C_j \right\}$ for $j \in [k]$.}

\begin{exmp}
\label{exmp:mainexample}
Let $p=2$. Then $\tilde{\mc C}_{j}\setminus \mc C_j$ and $\mc C_j \setminus \tilde{\mc C}_{j}$ are Lorentz cones.  
Let $\theta_{\mc C_j}$, $\theta_{\mc C_j\setminus\tilde{\mc C}_j}$ and ${\theta_{\tilde{\mc C}_j\setminus{{\mc C_j}}}}$ denote the angular width of $\mc C_j$, ${\mc C}_{j}\setminus \tilde{\mc C}_j$, and $\tilde{\mc C}_j \setminus {\mc C}_{j}$ respectively. 
Furthermore, we assume that 
\begin{equation}
\label{eq:minprobCgeqmaxprobdiff}
\min_{j\in[k]} \P\left\{\mb g\in \mc C_j \right\} \geq \max_{j\in[k]} \P\left\{\mb g\in {\mc C_j}\triangle\tilde{\mc C}_j\right\}\,.
\end{equation}
In this case, the parameter $\varrho$ in Proposition \ref{prop:main} is expressed as
\begin{align}
\varrho=\frac{\sqrt{2}\Gamma(3/2)}{\Gamma(1)}\Bigg[&\min_{j\in[k]}\frac{2}{\pi}\sin^2\left(\frac{\theta_{\mc C_j}}{4}\right)
-\max_{j\in[k]}\frac{1}{\pi}\sin\left({\frac{\theta_{\tilde{\mc C}_j\setminus{\mc C_j}}}{2}}\right)\nonumber\\
&-\max_{j\in[k]}\frac{1}{\pi}\sin\left({\frac{\theta_{\mc C_j \setminus \tilde{\mc C}_j}}{2}}\right)\Bigg]\,.
\label{eq:exp_varrho_2D}
\end{align}
When $\theta_{\mc C}$ is small enough, $\sin(\theta_{\mc C})\approx{\theta_{\mc C}}$ holds by the Taylor series approximation. Hence, there exists absolute constants $c_1>0$ and $c_2>0$ such that
\[
\varrho=c_1\min_{j\in[k]}{\P^2\{\mb g\in \mc C_j\}}-c_2\max_{j\in[k]}\P\{\mb g\in\tilde{\mc C}_j\triangle{\mc C_j}\}\,.
\] 
This example shows that $\zeta$ in Theorem~\ref{thm:main} is tight in the sense that the dominating term in both $\varrho$ and $\zeta$ is proportional to the squared probability measure of the smallest $\mc C_j$.
\end{exmp}

Let $\theta_{\mc C_j}$ denote the angular width of $\mc C_j$. 
Without loss of generality, we may assume that $\min_{j\in[k]} \theta_{\mc C_j} \leq \pi$. 
Furthermore, the assumption in \eqref{eq:minprobCgeqmaxprobdiff} implies that the angular width of $\mc C_j \triangle \tilde{\mc C}_j$ is at most $\pi$ for all $j \in [k]$. 
Therefore, the identity in \eqref{eq:exp_varrho_2D} is obtained by applying the following lemma, proved in Appendix~\ref{sec:proof:lem:Expect}, to the infimum/supremum of expectation terms in \eqref{eq:def_varrho}.

\begin{lem}
\label{lem:Expect}
Let $\mc C$ be a polyhedral cone in $\mbb R^2$ and $\mb g\sim \mathrm{Normal}(\mb 0, \mb I_2)$. Suppose that the angular width of $\mc C$, denoted by $\theta_{\mc C}$ satisfies $0\leq\theta_{\mc C}\leq\pi$.
Then we have
\[
\inf_{\begin{subarray}{l}\mb w \in \mbb S^{1} \end{subarray}} \E \, \bbone_{\mc C}(\mb g) \left| \left\langle \mb g, \mb w \right\rangle \right|=\frac{2\sqrt{2}\Gamma({3}/{2})}{\pi\Gamma(2)}\sin^2\left(\frac{\theta_{\mc C}}{4}\right)\,
\]
and
\[
\sup_{\begin{subarray}{l} \mb w \in \mbb S^{1} \end{subarray}} \E \bbone_{{\mc C}}(\mb g) \langle \mb g, \mb w \rangle_+=\frac{\sqrt{2}\Gamma({3}/{2})}{\pi\Gamma(2)}\sin\left({\frac{{\theta_{\mc C}}}{2}}\right)\,.
\]
\end{lem}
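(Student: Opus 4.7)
The plan is to reduce both expectations to one-dimensional angular integrals via polar coordinates. I would write $\mb g = (R\cos\Phi,\,R\sin\Phi)$ where $R$ has the Rayleigh density $r\,e^{-r^2/2}$ on $[0,\infty)$ and $\Phi$ is independent of $R$ and uniform on $[0,2\pi)$, so that $\E R = \sqrt{\pi/2} = \sqrt{2}\,\Gamma(3/2)/\Gamma(1)$. By rotational invariance of the standard Gaussian, assume without loss of generality that $\mc C = \{(r\cos\phi,\,r\sin\phi) : r\geq 0,\;\phi\in[-\theta_{\mc C}/2,\theta_{\mc C}/2]\}$, and parametrize $\mb w=(\cos\psi,\sin\psi)$ so that $\langle \mb g,\mb w\rangle = R\cos(\Phi-\psi)$. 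Independence of $R$ and $\Phi$ then factors each expectation as $\E R/(2\pi)$ times an angular integral, e.g.
\[
\E\,\bbone_{\mc C}(\mb g)\,|\langle \mb g,\mb w\rangle| = \frac{\E R}{2\pi}\int_{-\theta_{\mc C}/2}^{\theta_{\mc C}/2}|\cos(\phi-\psi)|\,d\phi,
\]
and analogously with $(\cdot)_+$ replacing $|\cdot|$. Optimizing over $\mb w\in\mbb S^1$ thereby reduces to optimizing these one-dimensional integrals over $\psi\in[0,2\pi)$.

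For the infimum, after the substitution $u=\phi-\psi$ and a further shift $v=u-\pi/2$ converting $|\cos u|$ to $|\sin v|$, I would use that $|\sin v|$ is nonnegative, $\pi$-periodic, and vanishes at $v=0$, so its integral over a length-$\theta_{\mc C}$ interval is minimized when the interval is centered at the origin. Under the hypothesis $\theta_{\mc C}\leq\pi$, this gives $\int_{-\theta_{\mc C}/2}^{\theta_{\mc C}/2}|\sin v|\,dv = 2\bigl(1-\cos(\theta_{\mc C}/2)\bigr)=4\sin^2(\theta_{\mc C}/4)$ by the half-angle identity; multiplying by $\E R/(2\pi)$ recovers the first identity. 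For the supremum, note that $(\cos u)_+$ is supported on $[-\pi/2,\pi/2]$ modulo $2\pi$, and since $\theta_{\mc C}\leq\pi$ the translated interval can be placed entirely inside this support. There $\int_a^{a+\theta_{\mc C}}\cos u\,du = \sin(a+\theta_{\mc C})-\sin a$ is maximized at $a=-\theta_{\mc C}/2$ by concavity of $\cos$ on $[-\pi/2,\pi/2]$, giving $2\sin(\theta_{\mc C}/2)$, and multiplying by $\E R/(2\pi)$ yields the second identity.

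The main obstacle is verifying that these extremal translations $\psi$ are truly the minimizer and maximizer rather than merely critical points. For the supremum the concavity argument pins down the global maximum immediately. For the infimum I would differentiate $\psi\mapsto\int_{-\theta_{\mc C}/2-\psi}^{\theta_{\mc C}/2-\psi}|\cos u|\,du$ and observe that critical points satisfy $|\cos(\theta_{\mc C}/2-\psi)|=|\cos(\theta_{\mc C}/2+\psi)|$, then compare the two natural candidates: $\psi=0$ gives the value $2\sin(\theta_{\mc C}/2)$ (the global maximum) while $\psi=\pi/2$ gives $4\sin^2(\theta_{\mc C}/4)$ (the global minimum, since $4\sin^2(\theta_{\mc C}/4)\leq 2\sin(\theta_{\mc C}/2)$ on $[0,\pi]$). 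This brief case check is the only step requiring any real care; everything else is a direct evaluation.
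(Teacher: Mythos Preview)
Your proof is correct and follows essentially the same route as the paper: both exploit the polar decomposition of $\mb g$ (independence of $\|\mb g\|_2$ and $\mb g/\|\mb g\|_2$) to reduce the problem to optimizing the one-dimensional angular integral $\int |\cos|$ or $\int(\cos)_+$ over a translation, and both identify the extremizers at $\psi=0$ and $\psi=\pi/2$. The only difference is stylistic: the paper performs an explicit three-case analysis according to where the interval $[\phi(\mb w)-\theta_{\mc C}/2,\,\phi(\mb w)+\theta_{\mc C}/2]$ sits relative to $\pi/2$, whereas your concavity argument for the supremum and critical-point comparison for the infimum arrive at the same values more concisely (just be sure to note that $|\cos(\theta_{\mc C}/2+\psi)|=|\cos(\theta_{\mc C}/2-\psi)|$ has \emph{only} $\psi\in\{0,\pi/2\}$ as solutions modulo $\pi$, so your two ``natural candidates'' exhaust the critical set).
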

\begin{IEEEproof}
See \Cref{sec:proof:lem:Expect}.
\end{IEEEproof}

\section{Discussion}
{As discussed in Section~\ref{sec:experiment}, the proposed convex estimator provides a comparable error bound relative to an oracle estimator in the adversarial noise case. 
However, it does not provide a consistent estimator with random noise. 
{ This inconsistency arises due to the maximization of the correlation with the anchor vector $\mb \theta$. 
Since the direction of the anchor vector does not coincide with the ground truth, the convex estimator introduces a bias.} 
{ { As a way to mitigate } the bias in the convex estimator, we propose the iterative anchored regression that recursively refines the anchor vector to better align its direction with that of the ground truth. We have demonstrated that the iterative anchored regression empirically provides an exact recovery of the ground-truth parameters in the presence of outliers.} {Hence}, it would be fruitful to pursue the theoretical analysis of the iterative anchored regression, particularly in terms of its behavior in the presence of outliers and { random noise}. 
{ Each iteration solves a linear program, which costs $\tilde{O}\left(((n+p)k)^c\right)$ with $c\approx2.38$ { as discussed in \Cref{subsec:comparison}}}.
Therefore, the per-iteration cost of the iterative anchored regression { might be} higher than that of the alternating minimization, which is  ${O}(nkp^2)$. 
To further alleviate the computational cost of the iterative version, one might consider warm-start strategies in interior-point methods for linear programming (e.g.  \cite{john2008implementation}).



\appendix

{
\subsection{Proof of \Cref{lem:oracle_bound}}
\label{sec:proof_oracle}
For brevity, we introduce the shorthand notations
\[
\mb A_j=\sum_{i=1}^n \bbone_{\mc C_j}(\mb x_i) \mb x_i\mb x_i^\T, \quad \text{and} \quad \mb b_j = \sum_{i=1}^n \bbone_{\mc C_j}(\mb x_i) y_i\mb x_i.
\] 
Then, since each $\mc C_j$ is given by the intersection of $(k-1)$ half-planes in $\mathbb{R}^p$, by \cite[Theorem~2]{vapnik2015uniform}, it holds with probability at least $1-\delta/3$ that
\begin{equation}
\label{eq:lb_sel_Cj}
\begin{aligned}
&\sup_{j \in [k]} \left| \frac{1}{n} \sum_{i=1}^n \bbone_{\mc C_j}(\mb x_i) - \P(\mb g \in \mc C_j) \right|
\leq \\
&\qquad\qquad C_1 \sqrt{\frac{\log(3/\delta)+kp\log(n/p)}{n}},
\end{aligned}
\end{equation}
which implies
\begin{equation}
\label{eq:lb_sel_Cj2}
c_2 n \pi_{\min} 
\leq 
\sum_{i=1}^n \bbone_{\mc C_j}(\mb x_i) 
\leq 
C_3 n \pi_{\max}, \quad \forall j \in [k]. 
\end{equation} 
Moreover, by \cite[Theorem~5.7]{tan2019phase}, with probability at least $1-\delta/3$, we have
\[
\sup_{\mathcal{I}:|\mathcal{I}|\leq \alpha n}\lambda_{\max}\left(\sum_{i\in\mathcal{I}}\mb x_{i}\mb x_{i}^\T\right)
\leq C_4 \sqrt{\alpha} n
\] 
provided 
\begin{equation}
\label{eq:samp_tanver}
n\geq \max\left(p,\frac{\log(3/\delta)}{\alpha}\right).
\end{equation}
We also use the following claim: If 
\begin{equation}
\label{eq:lwb_trunc_sample}
n \geq C_5 \beta^{-2}\max(p\log(n/p),\log(3/\delta)),
\end{equation}
then it holds with probability $1-\delta/3$ that
\begin{equation}
\label{eq:lb_lambda_A_j}
\inf_{\mc I \subset [n] : |\mc I|\geq\beta n} \lambda_{\min} \left( \sum_{i\in \mc I} \mb x_i \mb x_i^\top \right) \geq c_6 n \beta^3.
\end{equation}

\begin{IEEEproof}[Proof of Claim] 
For an arbitrarily fixed $T > 0$, we have 
\begin{equation}
\label{eq:mainarg2}
\frac{1}{n} \sum_{i\in \mc I} \langle\mb \xi_i,\mb v\rangle^2
\geq \frac{\beta T}{2}, \quad \forall \mc I \subset [n] : |\mc I| \geq \beta n
\end{equation}
provided 
\begin{equation}
\label{eq:condition1}
N(\mb v) :=\sum_{i=1}^{n} \bbone_{\{\mb x: \langle \mb x, \mb  v\rangle^2>T\}}(\mb x_i) > n-\frac{\beta n}{2}.
\end{equation}
Since $\{\mb x: \langle \mb x, \mb  v\rangle^2>T\}$ is consists of two half-spaces in $\mbb{R}^p$, by \cite[Theorem~2]{vapnik2015uniform}, there exists an absolute constant $C_7 > 0$, for which it holds with probability at least $1-\delta/3$ that 
\begin{equation}
\label{eq:VCarg1}
\frac{1}{n}N(\mb v)
\geq \frac{1}{n}\E N(\mb v) - C_7 \sqrt{\frac{p\log(n/p)+\log(3/\delta)}{n}}, \,\, \forall \mb v \in \mathbb{S}^{p-1}.
\end{equation}
Moreover, due to \cite[Lemma~15]{ghosh2019max}, we have 
\begin{equation}
\label{eq:smallball1}
\frac{1}{n} \E N(\mb v) = \P\left(|\langle\mb x,\mb v\rangle|^2>T\right)\geq 1 - \sqrt{eT}.
\end{equation} 
Plugging in \eqref{eq:smallball1} into \eqref{eq:VCarg1} yields 
\[
\frac{1}{n} N(\mb v)\geq 1 - \sqrt{eT} - C_7 \sqrt{\frac{p\log(n/p)+\log(3/\delta)}{n}}, \,\, \forall \mb v \in \mathbb{S}^{p-1}.
\]
Then \eqref{eq:condition1} is satisfied for all $\mb v \in \mathbb{S}^{p}$ by $T= \frac{\beta^2}{16e}$ and $C_5 = (4C_7)^2$. 
\end{IEEEproof}
Since \eqref{eq:sample_comp_oracle} implies \eqref{eq:samp_tanver} and \eqref{eq:lwb_trunc_sample}, combining the above results provides that
\[
c_8 n \pi_{\min}^3
\leq 
\lambda_{\min}(\mb A_j)
\leq 
\lambda_{\max}(\mb A_j) 
\leq C_9 n \sqrt{\pi_{\max}}, \quad \forall j \in [k],
\]
holds with probability $1-\delta$. 
Then the least squares solution in \eqref{eq:decoupled_leastsquared} is written as $\hat{\mb \beta}_j=\mb A_j^{-1} \mb b_j$ and satisfies
\begin{equation}
\label{eq:bstar-b}
\begin{aligned}
&\|\mb \beta_{\star,j}-\hat{\mb \beta}_j\|_2
\geq \frac{\lambda_{\min}^{1/2}\left(\mb A_j\right)}{\lambda_{\mathrm{max}}(\mb A_j)}\left\|\left(w_i\right)_{i:\mb x_i\in\mathcal{C}_j}\right\|_2 \\
&\qquad\geq\frac{c_{10} \pi_{\min}^{3/2}}{\sqrt{n \pi_{\max}}} \left\|\left(w_i\right)_{i:\mb x_i\in\mathcal{C}_j}\right\|_2\geq 
{\frac{c_{11} \pi_{\min}^{3/2}}{\pi_{\max}} \left\|\left(w_i\right)_{i:\mb x_i\in\mathcal{C}_j}\right\|_\infty}.
\end{aligned}
\end{equation}
Then taking a sum over $j
\in [k]$ and maximizing over $\mb w$ satisfying { $\|\mb w\|_\infty \leq \eta'$}, we obtain 
\[
{\sup_{\|\mb w\|_\infty \leq \eta'}} \sum_{j=1}^k\|\mb \beta_{\star,j}-\hat{\mb \beta}_j\|_2 \geq \frac{c_{12} \pi_{\min}^{3/2} \eta'}{\pi_{\max}}.
\]
This completes the proof. 

\subsection{Proof of Lemma~\ref{lem:Expect}}
\label{sec:proof:lem:Expect}
We first prove the first assertion. 
Since $\mc C$ is a cone, it follows that $\mb g \in \mc C$ if and only $\mb g/\norm{\mb g}_2 \in \mc C$.
Moreover, Bayes' rule implies
\[
\E \, \bbone_{\mc C}(\mb g) \left| \left\langle \mb g, \mb w \right\rangle \right|
= 
\P\left\{\mb g\in {\mc C}\right\} \E \left[ \left. \left| \left\langle \mb g, \mb w \right\rangle \right| \,\right|\, \mb g \in \mc C \right]\,.
\]
Therefore we have
\begin{align}
&\inf_{\mb w \in \mbb S^{1} } \E \, \bbone_{\mc C}(\mb g) \left| \left\langle \mb g, \mb w \right\rangle \right|\nonumber\\
&=\inf_{\mb w \in \mbb S^{1} } \, \P\left\{\mb g\in {\mc C}\right\} \E \left[\|\mb g\|_2 \left| \langle \frac{\mb g}{\|\mb g\|_2}, \mb w \rangle\right| \,\left|\, \frac{\mb g}{\|\mb g\|_2}\in {\mc C}\right.\right] \nonumber \\
&\overset{\mathrm{(a)}}{=}\inf_{\begin{subarray}{l}  \mb w \in \mbb S^{1} \end{subarray}} \, \P\left\{\mb g\in {\mc C}\right\} \E \left[{\|\mb g\|_2}\right]\E \left[ \, \left| \left\langle \frac{\mb g}{\|\mb g\|_2}, \mb w \right\rangle\right| \,\left|\,\frac{\mb g}{\|\mb g\|_2}\in {\mc C}\right.\right] \nonumber \\
&\overset{\mathrm{(b)}}{=}{\frac{\sqrt{2}\Gamma(3/2)}{\Gamma(2)}} \inf_{\mb w \in \mbb S^{1} } \, {\frac{\theta_{\mc C}}{2\pi}} \, \E \left[ \, \left| \left\langle \frac{\mb g}{\|\mb g\|_2}, \mb w \right\rangle\right| \,\left|\,\frac{\mb g}{\|\mb g\|_2}\in {\mc C}\right.\right]\,, \label{eq1:proof:lem:Expect}
\end{align}
where (a) holds since $\|\mb g\|_2$ and $\mb g/\|\mb g\|_2$ are independent and (b) follows from $\E \norm{\mb g}_2 = \sqrt{2}\Gamma(3/2)/\Gamma(2)$ and \[
\P\{\mb g\in\mc C\}=\P\left\{\frac{\mb g}{\|\mb g\|_2}\in\mc C\right\}=\frac{\theta_{\mc C}}{2\pi}\,.
\]
Then it remains to compute the expectation in \eqref{eq1:proof:lem:Expect}. 
Below we show that 
\begin{equation}
\label{eq2:proof:lem:Expect}
\inf_{\mb w \in \mbb S^2}
\E \left[ \left| \left\langle \frac{\mb g}{\|\mb g\|_2}, \mb w \right\rangle\right| \,\left|\, \frac{\mb g}{\|\mb g\|_2}\in {\mc C} \right. \right] 
= \frac{4}{\theta_{\mc C}}\sin^2\left(\frac{\theta_{\mc C}}{2}\right)\,
\end{equation}
and
\begin{equation}
\label{eq2a:proof:lem:Expect}
\sup_{\mb w \in \mbb S^2}
\E \left[ \left| \left\langle \frac{\mb g}{\|\mb g\|_2}, \mb w \right\rangle\right| \,\left|\, \frac{\mb g}{\|\mb g\|_2}\in {\mc C} \right. \right] 
= \frac{2}{\theta_{\mc C}}\sin\left(\frac{\theta_{\mc C}}{2}\right)\,.
\end{equation}
Let $\mc T= \{\mb a, \mb b \} \subset \mbb S^{1}$ satisfy that $\mc C$ is the conic hull of $\mc T$. 
Then let $\mb z$ be the unit vector obtained by normalizing $(\mb a+\mb b)/2$. 
Then we have $\angle({\mb a,\mb z})=\theta_{\mc C}/2$ and $\angle(\mb b, \mb z)=\theta_{\mc C}/{2}$. Let $\phi: \mbb S^1 \to \mbb R$ be defined by $\phi(\mb w):=\angle(\mb z,\mb w)$. 
Since the conditional expectation applies to $|\langle \mb g / \norm{\mb g}_2, \mb w\rangle|$, which is invariant under the global sign change in $\mb w$, it suffices to consider $\mb w$ that satisfies $0\leq\phi(\mb w)\leq\pi$. 
Since $\mb g/\|\mb g\|_2$ is uniformly distributed on the unit sphere, the expectation term in \eqref{eq2:proof:lem:Expect} is written as
\begin{equation}
\label{eq:ojbective_cos}
\E \left[ \left| \left\langle \frac{\mb g}{\|\mb g\|_2}, \mb w \right\rangle\right| \,\left|\, \frac{\mb g}{\|\mb g\|_2}\in {\mc C} \right. \right]
= \frac{1}{\theta_{\mc C}}\int_{\phi(\mb w)-\theta_{\mc C}/2}^{\phi(\mb w)+\theta_{\mc C}/2} |\cos\theta| d\theta\,.
\end{equation} 
It follows from the assumption on the range of $\theta_{\mc C}$ and $\phi(\mb w)$ that $-{\pi}/{2}\leq\phi(\mb w)-{\theta_{\mc C}}/{2}\leq\pi$ and $0\leq\phi(\mb w)+{\theta_{\mc C}}/{2}\leq{3\pi}/{2}$. 
We proceed by separately considering the complementary cases for $(\theta_{\mc C}, \phi(\mb w))$ given below.\\

\noindent\textbf{Case 1:} Suppose that 
\begin{equation}
\label{eq:case1}
-\frac{\pi}{2}\leq\phi(\mb w)-\frac{\theta_{\mc C}}{2}<\phi(\mb w)+\frac{\theta_{\mc C}}{2}\leq\frac{\pi}{2}\,.
\end{equation}
Then $\phi(\mb w)$ is constrained by
\begin{equation}
\label{eq:interval:phiw:subcase1}
0\leq\phi(\mb w)\leq{\pi}/{2}-{\theta_{\mc C}}/{2}\,.
\end{equation}
Furthermore, the integral in \eqref{eq:ojbective_cos} is rewritten as 
\begin{align}
\int_{\phi(\mb w)-\theta_{\mc C}/2}^{\phi(\mb w)+\theta_{\mc C}/2} |\cos\theta| d\theta
&=\int_{\phi(\mb w)-\theta_{\mc C}/2}^{\phi(\mb w)+\theta_{\mc C}/2} \cos\theta d\theta \nonumber \\
&=\sin\left(\phi(\mb w)+\frac{\theta_{\mc C}}{2}\right)-\sin\left(\phi(\mb w)-\frac{\theta_{\mc C}}{2}\right)\nonumber\\&
=2\cos\left(\phi(\mb w)\right)\sin\left({\frac{\theta_{\mc C}}{2}}\right)\label{eq:subresult_1}\,.
\end{align}
Since $\sin(\theta_{\mc C}/2) \geq 0$, the expression in \eqref{eq:subresult_1} monotonically decreases in $\phi(\mb w)$ for the interval given in \eqref{eq:interval:phiw:subcase1}.
Thus the maximum (resp. minimum) is attained as $2\sin(\theta_{\mc C}/2)$ at $\phi(\mb w) = 0$ (resp. $2\sin^2(\theta_{\mc C}/2)$ at $\phi(\mb w) = \pi/2 - \theta_{\mc C}/2$).\\

\noindent\textbf{Case 2:} Suppose that
\begin{equation}
\label{eq:case2}
- \frac{\pi}{2} \leq \phi(\mb w)-\frac{\theta_{\mc C}}{2} < \frac{\pi}{2} < \phi(\mb w)+\frac{\theta_{\mc C}}{2}\leq\frac{3\pi}{2}\,.
\end{equation} 
Then $\phi(\mb w)$ satisfies
\begin{equation}
\label{eq:case2_phiw}
\frac{\pi}{2}-\frac{\theta_{\mc C}}{2} \leq \phi(\mb w) \leq \frac{\pi}{2}+\frac{\theta_{\mc C}}{2}\,
\end{equation} 
and the integral in \eqref{eq:ojbective_cos} reduces to
\begin{align}
\int_{\phi(\mb w)-\theta_{\mc C}/2}^{\phi(\mb w)+\theta_{\mc C}/2}& |\cos\theta| d\theta\nonumber\\
&=\int_{\phi(\mb w)-\frac{\theta_{\mc C}}{2}}^{\frac{\pi}{2}} \cos\theta d\theta-\int_{\frac{\pi}{2}}^{\phi(\mb w)+\frac{\theta_{\mc C}}{2}} \cos\theta d\theta\nonumber\\&=2-\sin\left(\phi(\mb w)-\frac{\theta_{\mc C}}{2}\right)-\sin\left(\phi(\mb w)+\frac{\theta_{\mc C}}{2}\right)\nonumber\\&=2-2\sin\left(\phi(\mb w)\right)\cos\left({\frac{\theta_{\mc C}}{2}}\right)\label{eq:obj2}\,.
\end{align} 
Since $\cos(\theta_{\mc C}/2) \geq 0$ for all $\theta_{\mc C} \in [0, \pi]$, the maximum (resp. minimum) is attained as $2\sin^2({\theta_{\mc C}}/{2})$ at $\phi(\mb w) = \pi/2 - \theta_{\mc C}/2$ (resp. $4\sin^2({{\theta_{\mc C}}/{4}})$ at $\phi(\mb w) = \pi/2$). \\

\noindent\textbf{Case 3:} Suppose that
\begin{equation}
\label{eq:case3}
\frac{\pi}{2}\leq{\phi(\mb w)-\frac{\theta_{\mc C}}{2}}<\phi(\mb w)+\frac{\theta_{\mc C}}{2}\leq\frac{3\pi}{2}\,.
\end{equation}
Then we have
\begin{equation}
\label{eq:case3:phiw}
\frac{\pi}{2}+\frac{\theta_{\mc C}}{2} \leq \phi(\mb w) \leq \pi\,
\end{equation}
and 
\begin{align}
\int_{\phi(\mb w)-\theta_{\mc C}/2}^{\phi(\mb w)+\theta_{\mc C}/2}& |\cos\theta| d\theta\nonumber\\
&=\int_{\phi(\mb w)-\theta_{\mc C}/2}^{\phi(\mb w)+\theta_{\mc C}/2} (-\cos\theta) d\theta \nonumber \\
&=\sin\left(\phi(\mb w)-\frac{\theta_{\mc C}}{2}\right)-\sin\left(\phi(\mb w)+\frac{\theta_{\mc C}}{2}\right)\nonumber\\
&=-2\cos{\phi(\mb w)}\sin{\frac{\theta_{\mc C}}{2}}\,\label{eq:obj3}.
\end{align} 
The maximum (resp. minimum) of \eqref{eq:obj3} is attained as $2\sin({{\theta_{\mc C}}/{2}})$ at $\phi(\mb w)=\pi$ (resp. $2\sin^2({\theta_{\mc C}}/{2})$ at $\phi(\mb w)={\pi}/{2}+{\theta_{\mc C}}/{2}$).

By combining the results in the above three cases, we obtain \eqref{eq2:proof:lem:Expect} and \eqref{eq2a:proof:lem:Expect}. 
Then substituting the expectation term in \eqref{eq1:proof:lem:Expect} by \eqref{eq2:proof:lem:Expect} provides the first assertion. 

Next we prove the second assertion. 
Similarly to \eqref{eq1:proof:lem:Expect}, we have
\begin{align*}
&\sup_{\begin{subarray}{l} \mb w \in \mbb S^{1} \end{subarray}} \E \bbone_{{\mc C}_{j}}(\mb g) \langle \mb g, \mb w \rangle_+
\\
&=\sup_{\begin{subarray}{l} \mb w \in \mbb S^{1} \end{subarray}} \, P\left\{\mb g\in {\mc C}\right\} \E \left[\|\mb g\|_2 \cdot \left\langle \frac{\mb g}{\|\mb g\|_2}, \mb w \right\rangle_+\,\bigg|\,\frac{\mb g}{\|\mb g\|_2}\in {\mc C}\right]\\
&\overset{(a)}{=}\sup_{\begin{subarray}{l}  \mb w \in \mbb S^{1} \end{subarray}} \, P\left\{\mb g\in {\mc C}\right\} \E \left[{\|\mb g\|_2}\right]\E \left[  \left\langle \frac{\mb g}{\|\mb g\|_2}, \mb w \right\rangle_+\,\bigg|\,\frac{\mb g}{\|\mb g\|_2}\in {\mc C}\right]\\
&\overset{(b)}{=}\frac{\sqrt{2}\Gamma(3/2)}{\Gamma(2)}\sup_{\begin{subarray}{l} \mb w \in \mbb S^{1} \end{subarray}} \, \frac{\theta_{\mc C}}{2\pi} \E \left[  \left\langle \frac{\mb g}{\|\mb g\|_2}, \mb w \right\rangle_+\,\bigg|\,\frac{\mb g}{\|\mb g\|_2}\in {\mc C}\right]\,,
\end{align*} 
where (a) holds since $\|\mb g\|_2$ and $\mb g/\|\mb g\|_2$ are independent, (b) follows from $\E \norm{\mb g}_2 = \sqrt{2}\Gamma(3/2)/\Gamma(2)$, and \[
\P\{\mb g\in\mc C\}=\P\left\{\frac{\mb g}{\|\mb g\|_2}\in\mc C\right\}=\frac{\theta_{\mc C}}{2\pi}\,.
\]
If suffices to show that 
\begin{equation}
\label{eq2:proof:lem:Expect2}
\max_{\mb w \in \mbb S^1} 
\E \left[  \left\langle \frac{\mb g}{\|\mb g\|_2}, \mb w \right\rangle_+ \,\left|\, \frac{\mb g}{\|\mb g\|_2}\in {\mc C} \right. \right] 
=
\frac{2}{\theta_{\mc C}}\sin\left({\frac{\theta_{\mc C}}{2}}\right)\,.
\end{equation}
Since ${\mb g}/{\|\mb g\|_2}$ is uniformly distributed on the unit sphere $\mbb S^1$ and $u_+ = (u + |u|)/2$ for all $u \in \mbb R$, we have
\begin{align}
& \E \left[ \left\langle \frac{\mb g}{\|\mb g\|_2}, \mb w \right\rangle_+ \bigg|\frac{\mb g}{\|\mb g\|_2}\in {\mc C}\right] \nonumber \\
&= \int_{\phi(\mb w)-\theta_{\mc C}/2}^{\phi(\mb w)+\theta_{\mc C}/2} \frac{\cos\theta+|\cos\theta|}{2 \theta_{\mc C}} d\theta\nonumber\\
&=\frac{1}{2}\left(\frac{1}{\theta_{\mc C}}\int_{\phi(\mb w)-\theta_{\mc C}/2}^{\phi(\mb w)+\theta_{\mc C}/2}|\cos\theta| d\theta+\frac{1}{\theta_{\mc C}}\int_{\phi(\mb w)-\theta_{\mc C}/2}^{\phi(\mb w)+\theta_{\mc C}/2}\cos\theta d\theta\right)\,\nonumber \\
&=\frac{1}{2}\bigg(\E \left[ \left|\left\langle \frac{\mb g}{\|\mb g\|_2}, \mb w \right\rangle\right|\,\bigg|\,\frac{\mb g}{\|\mb g\|_2}\in {\mc C}\right]\nonumber\\
&~~~~~~~~~~~~~~~~~~~~~~~~~+\E \left[ \left\langle \frac{\mb g}{\|\mb g\|_2}, \mb w \right\rangle \,\bigg|\,\frac{\mb g}{\|\mb g\|_2}\in {\mc C}\right]\,\bigg).\label{eq:objective_twoexpect}
\end{align}
As shown above, the first term in \eqref{eq:objective_twoexpect} is maximized at $\phi(\mb w) = 0$ and the maximum is given in \eqref{eq2a:proof:lem:Expect}. 
Furthermore, the second term in \eqref{eq:objective_twoexpect} is rewritten as
\begin{align}
\int_{\phi(\mb w)-\theta_{\mc C}/2}^{\phi(\mb w)+\theta_{\mc C}/2}\cos\theta d\theta
&=\sin\left(\phi(\mb w)+\frac{\theta_{\mc C}}{2}\right)-\sin\left(\phi(\mb w)-\frac{\theta_{\mc C}}{2}\right) \nonumber \\
&=2\cos{\phi(\mb w)}\sin\left(\frac{\theta_{\mc C}}{2}\right)\,. \label{eq:obj4}
\end{align}
Since $\sin\left(\theta_{\mc C}/2\right)\geq0$, the expression in \eqref{eq:obj4} is a decreasing function of $\phi(\mb w)\in[0,\pi]$.
Hence, the maximum is attained at $\phi(\mb w)=0$ as 
\begin{equation}
\label{eq:max_7}
\max_{\mb w\in\mbb{S}^1}{2\cos\phi(\mb w)\sin\left(\frac{\theta_{\mc C}}{2}\right)}=2\sin\left(\frac{\theta_{\mc C}}{2}\right)\,.
\end{equation}
Since the two terms in \eqref{eq:objective_twoexpect} are maximized simultaneously, by plugging in the above results to \eqref{eq2:proof:lem:Expect2}, the second assertion is obtained.

{
\subsection{Proof of Lemma~\ref{lem:noise_norm_bound}}
\label{sec:proof:noise_norm_bound}
By construction, we have
\[
\frac{1}{n} \sum_{i=1}^N w_i \bm x_i \sim \mathrm{Normal}\left(\bm 0, \frac{\norm{\bm w}_2^2}{n^2} \bm I_p\right).
\]
Then, the concentration of the Euclidean norm of a standard Gaussian vector guarantees, with probability at least $1-\delta/2$, that
{
\begin{equation}
\label{eq:argbnd1}
\left\|\frac{1}{n}\sum_{i=1}^n w_i\mb x_i\right\|_2
\lesssim
\frac{\|\mb w\|_2}{{n}}(\sqrt{p}+\sqrt{\log(1/\delta)})
\end{equation}
for some absolute constant $C$. This implies the first bound in \eqref{eq:bounds_noise_terms1}.}

Next, we want to obtain an upper bound on the second term in \eqref{eq:bounds_noise_terms1}. By the variational characterization of the spectral norm, we have
\begin{equation}
\label{eq:bnd_tau_i_2}
\begin{aligned}
\left\|\frac{1}{n}\sum_{i=1}^{n}w_i\left(\mb x_i\mb x_i^\T-\mb I_p\right)\right\|  \leq\sup_{\mb u\in{\mathbb B}_2^{p}}\left|\frac{1}{n}\sum_{i=1}^{n}w_i\left((\mb x_i^\T\mb u)^2-1\right)\right|.
\end{aligned}
\end{equation}
For brevity, we introduce a shorthand notation to denote the following random process 
\[
Y_{\mb u}:=\sum_{i=1}^{n}w_i\left((\mb x_i^\T\mb u)^2-1\right),
\]
indexed by $\mb u\in{\mathbb B}_2^p$. Then, for $\mb u,\mb u'\in\mathbb{B}_2^p,$ we have
\[
Y_{\mb u}-Y_{\mb u}=\sum_{i=1}^n w_i\langle\mb x_i,\mb u-\mb u'\rangle\langle\mb x_i,\mb u+\mb u'\rangle.
\]
Therefore, we bound the subexponential norm of each summand as 
\begin{align*}
&\left\|w_i\langle\mb x_i,\mb u-\mb u'\rangle\langle\mb x_i,\mb u+\mb u'\rangle\right\|_{\psi_{1}}\\
&\leq w_i\|\langle\mb x_i,\mb u-\mb u'\rangle\|_{\psi_2}\cdot\|\mb x_i,\mb u+\mb u'\|_{\psi_2}\lesssim w_i\|\mb u-\mb u'\|_2.
\end{align*}
Applying the Bernstein inequality (e.g. see \cite[Theorem~2.8.1]{vershynin2018high}) then yields 
\begin{equation}
\label{eq:Bernstein}
\begin{aligned}
&\P\left(\left|Y_{\mb u}-Y_{\mb u'}\right|\geq c\left(\sqrt{t}\|\mb w\|_2\|\mb u-\mb u'\|_2+t\|\mb w\|_{\infty}\|\mb u-\mb u'\|_2\right)\right)\\
&\qquad\leq2\exp(-t),
\end{aligned}
\end{equation} for any $t\geq0$ and an absolute constant $c$. Then, the process $Y_{\mb u}$ has mixed tail increments (i.e, see \cite[Equation~12]{dirksen2015tail}) with respect to the metrics $(d_1,d_2)$ where $d_1(\mb a,\mb b)=\|\mb w\|_{\infty}\|\mb a-\mb b\|_2$ and $d_2(a,b)=\|\mb w\|_2\|\mb a-\mb b\|_2$ for any $\mb a,\mb b\in\mathbb{B}_2^p$. Hence, applying \cite[Corollary~5.2]{dirksen2015tail} with the bound on $\gamma$-functional (i.e, see \cite[Equation~4]{dirksen2015tail}) provides
\begin{equation*}
\begin{aligned}
&\sup_{\mb u\in\mathbb{B}_2^p}|Y_{\mb u}| \\ 
&\lesssim{\|\mb w\|_2}\left(\int_{0}^{\infty}\sqrt{\log{N\left({\mathbb B}_2^p,\|\cdot\|_2,\eta\right)}} d\eta+\sqrt{\log(1/\delta)}\right) \\ 
&\qquad  +\|\mb w\|_{\infty}\left(\int_{0}^{\infty}{\log{N\left({\mathbb B}_2^p,\|\cdot\|_2,\eta\right)}} d\eta+\log(1/\delta)\right)\\
&\overset{\mathrm{(b)}}{\leq}\|\mb w\|_2(\sqrt{p}+\sqrt{\log(1/\delta)})+\|\mb w\|_{\infty}(p+\log(1/\delta)),
\end{aligned}
\end{equation*} holds with probability at least $1-\delta/2$ where $\mathrm{(b)}$ holds due to an upper bound on the covering number $N({\mathbb B}_2^p,\|\cdot\|_2,\eta)\leq(3/\eta)^p$ (e.g. see \cite[Example~8.1.11]{vershynin2018high}). 
{ This implies the second bound in \eqref{eq:bounds_noise_terms1}.}

}

\subsection{Proof of Lemma~\ref{lem:rhs}}
\label{sec:proof:lem:rhs}

For any $\mb z$ satisfying $\norm{\mb z}_{1,2} = 1$, we have
\begin{equation}
\label{eq:lb_Vz}
V_{\mb z} \geq \min_{\norm{\mb z}_{1,2} = 1} \E V_{\mb z} - \sup_{\mb z \in B_{1,2}} |V_{\mb z} - \E V_{\mb z}|\,.
\end{equation}
In what follows, we derive lower estimates of the summands in the right-hand side of \eqref{eq:lb_Vz}. 

First, we derive a lower bound on $\min_{\norm{\mb z}_{1,2} = 1} \E V_{\mb z}$. 
Since $\mb x_1,\dots,\mb x_n$ are i.i.d. $\mathrm{Normal}(\mb 0, \mb I_p)$, we have
\begin{align*}
\E V_{\mb z}
&= \E \, \frac{1}{n} \sum_{i=1}^n \sum_{j=1}^k \bbone_{\mc C_j}(\mb x_i) \left| \langle \mb x_i, \mb z_j \rangle \right|
= \E \, \sum_{j=1}^k \bbone_{\mc C_j}(\mb g) \left| \langle \mb g, \mb z_j \rangle \right|\\
&= \sum_{j=1}^k \norm{\mb z_j}_2 \E \, \bbone_{\mc C_j}(\mb g) \left| \left\langle \mb g, \frac{\mb z_j}{\norm{\mb z_j}_2} \right\rangle \right|\,,
\end{align*}
where $\mb z = [\mb z_1;\ \dots ;\ \mb z_k]$. 
Then $\E V_{\mb z}$ is lower-bounded by
\[
\E V_{\mb z}
\geq \norm{\mb z}_{1,2} \inf_{j \in [k], \mb w \in \mbb S^{p-1}} \E \, \bbone_{\mc C_j}(\mb g) \left| \left\langle \mb g, \mb w \right\rangle \right|\,.
\]

Next, we show that $(V_{\mb z} - \E V_{\mb z})_{\mb z \in B_{1,2}}$ is concentrated around $0$ with high probability by using the following lemma. 

\begin{lem}
\label{lem:concentration_Uz}
Suppose that $\mc A_1, \dots, \mc A_k$ be disjoint subsets in $\mathbb{R}^p$. 
Let $(U_{\mb z})_{\mb z \in B_{1,2}}$ be a random process defined by 
\begin{equation}
\label{eq:def_Uz}
U_{\mb z} := \frac{1}{n} \sum_{i=1}^n \sum_{j=1}^k \bbone_{\mc A_j}(\mb x_i) \langle \mb x_i, \mb z_j \rangle_+\,,
\end{equation}
where $\mb x_1,\dots,\mb x_n$ are i.i.d. $\mathrm{Normal}(\mb 0,\mb I_p)$. Then, for any $\delta\in (0,1)$, there exists an absolute constant $c > 0$ such that
\begin{equation}
\label{eq:tailbound_Uz}
\sup_{\mb z \in B_{1,2}} \left| U_{\mb z} - \E U_{\mb z}\right|  
\leq 
c \left(\frac{p \log^3p \log^5k + \log(\delta^{-1}) \log k}{n}\right)^{1/2}\,
\end{equation}
holds with probability at least $1-\delta$. 
\end{lem}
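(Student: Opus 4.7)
The goal is to control the supremum uniformly in probability, and I would split this into (i) an expectation bound and (ii) a concentration bound of the supremum around its expectation. For (i), I would first apply Gin\'e--Zinn symmetrization to reduce to a Rademacher average
\[
\E\sup_{\mb z\in B_{1,2}}\bigl|U_{\mb z}-\E U_{\mb z}\bigr|\;\le\; 2\,\E\sup_{\mb z\in B_{1,2}}\Bigl|\tfrac1n\sum_{i=1}^n\varepsilon_i h_i(\mb z)\Bigr|,\qquad h_i(\mb z)\defeq\sum_{j=1}^k\bbone_{\mc A_j}(\mb x_i)\langle\mb x_i,\mb z_j\rangle_+,
\]
where $\varepsilon_1,\dots,\varepsilon_n$ are i.i.d.\ Rademachers independent of the $\mb x_i$. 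Because the sets $\mc A_1,\dots,\mc A_k$ are pairwise disjoint, for every fixed $\mb x_i$ the map $h_i$ is simply the positive part of a single linear functional $\mb z\mapsto\langle\mb x_i,\mb z_{j^\star(\mb x_i)}\rangle$, where $j^\star(\mb x_i)$ is the unique active index (or zero if none exists). Talagrand's contraction principle for the $1$-Lipschitz map $t\mapsto t_+$, which vanishes at the origin, then lets me strip off the positive part at the cost of a universal constant.

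After contraction the process is linear in $\mb z$, so the supremum over the convex body $B_{1,2}$ is dual to the $\ell_{\infty,2}$ norm of its gradient. Setting $\mb v_j\defeq\tfrac1n\sum_{i=1}^n\varepsilon_i\bbone_{\mc A_j}(\mb x_i)\mb x_i$, the supremum collapses to $\max_{j\in[k]}\|\mb v_j\|_2$. For each fixed $j$, $\E\|\mb v_j\|_2\le\sqrt{p/n}$ follows from a direct second-moment computation, and $\|\mb v_j\|_2$ concentrates subgaussianly, conditionally on the $\mb x_i$, via a Talagrand-type inequality for $1$-Lipschitz functions of Rademachers. A union bound over $j\in[k]$ then produces the $\log k$ factor.

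Finally I would lift this expectation bound to a uniform high-probability statement by a Talagrand/Bousquet concentration inequality for suprema of empirical processes. The main obstacle is that the envelope $\|h_i\|\le\|\mb x_i\|_2$ is only subgaussian rather than uniformly bounded, so the standard bounded-summand Talagrand inequality does not directly apply. I would handle this by truncating $\mb x_i$ at a level $t\asymp\sqrt{p\log p}$, applying Bousquet's inequality to the truncated empirical process and controlling the residual through the subgaussian tails of $\|\mb x_i\|_2$ and $\langle\mb x_i,\mb u\rangle$. The delicate balance between the truncation level, the $\mbb S^{p-1}$ net, and the union bound over the $k$ cones is what inflates the polylogarithmic factor to $\log^3p\,\log^5k$; sharpening this would require replacing the crude union-bound step by a generic chaining ($\gamma_2$-functional) argument on $B_{1,2}$, which I would regard as the hardest technical step.
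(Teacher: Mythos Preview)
Your proposal is sound but takes a genuinely different route from the paper's proof. The paper does \emph{not} split into an expectation bound plus a Talagrand/Bousquet concentration step. Instead, it shows directly that the process $(U_{\mb z})$ has subgaussian increments with respect to the $\ell_\infty^k(\ell_2^p)$-norm,
\[
\norm{U_{\mb z}-U_{\mb z'}}_{\psi_2}\lesssim \frac{\sqrt{\log k}}{\sqrt n}\,\max_{j\in[k]}\norm{\mb z_j-\mb z'_j}_2,
\]
by using disjointness of the $\mc A_j$ together with the elementary fact $\norm{\max_{j}|\langle\mb g,\mb a_j\rangle|}_{\psi_2}\lesssim\sqrt{\log k}\,\max_j\norm{\mb a_j}_2$. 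It then invokes a tail-bound version of Dudley's inequality in one stroke, and evaluates the entropy integral via the inclusion $B_{1,2}\subset\sqrt{p}\,B_1$ and Maurey's empirical method, which is exactly where the $\log^{3}p\,\log^{5}k$ factor originates.

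Your route---symmetrization, Ledoux--Talagrand contraction through $t\mapsto t_+$, and dualization to $\max_{j}\norm{\mb v_j}_2$---is a legitimate alternative, and in fact the expectation bound it yields is potentially \emph{sharper} in the polylogarithmic factors than what Maurey's method gives; after contraction the process is linear, so no entropy integral is needed. The price you pay is the second stage: because the Dudley tail version already delivers the deviation term, the paper never has to confront the unbounded-envelope issue you flag, whereas you must carry out the truncation-plus-Bousquet argument (or cite an Adamczak-type inequality for subgaussian envelopes). That step is doable but genuinely more laborious than the paper's one-line appeal to Dudley. One small correction: your attribution of the specific $\log^{3}p\,\log^{5}k$ exponent to the ``delicate balance'' in your truncation/union-bound step is misplaced---that exponent is an artifact of the paper's Maurey entropy estimate, and your approach would in principle give different (and likely smaller) logarithmic powers, not the same ones.
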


\begin{IEEEproof}
We first show that $U_{\mb z}$ has sub-Gaussian increments with respect to the $\ell_\infty^k(\ell_2^p)$-norm, i.e.
\begin{equation}
\label{eq:subginc}
\norm{U_{\mb z} - U_{\mb z'}}_{\psi_2}\lesssim \frac{\sqrt{\log k}}{\sqrt{n}} \, \norm{\left(\mb z_j\right)_{j=1}^k - \left(\mb z'_j\right)_{j=1}^k}_{\ell_\infty^k(\ell_2^p)}\,.
\end{equation}
Since $\mc A_{1},\dots,\mc A_{k}$ are disjoint, it follows that 
\begin{align}
\left|{U_{\mb z} - U_{\mb z'}}\right|
&\leq
\frac{1}{n}\sum_{i=1}^{n}\sum_{j=1}^{k}\bbone_{\mc A_j}(\mb x_i) \left|\langle\mb x_i,\mb z_j-\mb z_j'\rangle\right|\nonumber\\
&\leq\frac{1}{n}\sum_{i=1}^{n}\max_{1\leq{j}\leq{k}}\left|\langle\mb x_i,\mb z_j-\mb z_j'\rangle\right|\, \label{eq:ubVzabs}\,
\end{align}
holds almost surely, where the last step follows from H\"older's inequality. We proceed with the following lemma.

\begin{lem}[{\cite[Lemma~2.2.2]{van1996weak}}]
\label{lem:subgnormax}
Let $\mb g \sim \mathrm{Normal}(\mb 0, \mb I_p)$ and $\mb a_1,\dots,\mb a_k \in \mbb{R}^p$. Then
\begin{equation*}
\norm{\max_{j \in [k]} \left| \langle \mb g, \mb a_j \rangle \right|}_{\psi_2}
\lesssim \sqrt{\log k} \max_{j \in [k]} \norm{\mb a_j}_2\,.
\end{equation*}
\end{lem}

It follows from \eqref{eq:ubVzabs} and Lemma~\ref{lem:subgnormax} that
\begin{align*}
\norm{U_{\mb z} - U_{\mb z'}}_{\psi_2}
&\leq \norm{\frac{1}{n}\sum_{i=1}^{n}\max_{j \in [k]}\left|{\langle{\mb x_i,\mb z_j-\mb z_j'}\rangle}\right|}_{\psi_2} \\
&\lesssim \frac{1}{n}\sqrt{\sum_{i=1}^{n} \norm{\max_{j \in [k]} \left|{\langle{\mb x_i,\mb z_j-\mb z_j'}\rangle}\right|}_{\psi_2}^2}\\
& \lesssim \frac{\sqrt{\log k}}{\sqrt{n}} \max_{j \in [k]} \norm{\mb z_j - \mb z'_j}_2 \\
& = \frac{\sqrt{\log k}}{\sqrt{n}} \, \norm{\left(\mb z_j\right)_{j=1}^k - \left(\mb z'_j\right)_{j=1}^k}_{\ell_\infty^k(\ell_2^p)}\,,
\end{align*}
where the second inequality follows from \cite[Proposition~2.6.1]{vershynin2018high}. 

Since $U_z$ has a sub-Gaussian increment as in \eqref{eq:subginc}, by \cite[Lemma~2.6.8]{vershynin2018high}, which says that centering does not harm the sub-gaussianity, we also have 
\begin{align}
&\norm{\left(U_{\mb z} - \E U_{\mb z}\right) - \left(U_{\mb z'} - \E U_{\mb z'}\right)}_{\psi_2}\nonumber\\
&\qquad\lesssim \frac{\sqrt{\log k}}{\sqrt{n}} \, \norm{\left(\mb z_j\right)_{j=1}^k - \left(\mb z'_j\right)_{j=1}^k}_{\ell_\infty^k(\ell_2^p)}\,.
\label{eq:subginc2}
\end{align} 
Therefore Dudley's inequality \cite{dudley1967sizes} applies to provide a tail bound on the left-hand side of \eqref{eq:tailbound_Uz}. 
Specifically it follows from a version of Dudley's inequality \cite[Theorem~8.1.6]{vershynin2018high} that
\begin{align}
&\sup_{\mb z \in B_{1,2}} \left| U_{\mb z} - \E U_{\mb z} \right|\lesssim\nonumber\\
& \frac{\sqrt{\log{k}}}{\sqrt{n}}\left(\int_0^\infty \sqrt{\log N(B_{1,2}, \norm{\cdot}_{\ell_\infty^k(\ell_2^p)},\eta)} d\eta+u \, \mathrm{diam}\left( B_{1,2}\right)\right)\, 
\label{eq:ubbydudley}
\end{align} 
holds with probability at least $1-2\exp(-u^2)$. 
Note that the diameter term in \eqref{eq:ubbydudley} is trivially upper-bounded by 
\begin{align*}
\mbox{diam}( B_{1,2}) 
= \sup_{\mb z,\mb z' \in  B_{1,2}} \norm{\mb z - \mb z'}_{\ell_\infty^k(\ell_2^p)} \leq 2\,.
\end{align*}
Moreover, since $B_{1,2} \subseteq \sqrt{p} B_1$, where $B_1$ denotes the unit ball in $\ell_1$, we have
\begin{align*}
\int_0^\infty& \sqrt{\log N(  B_{1,2}, \norm{\cdot}_{\ell_\infty^k(\ell_2^p)},\eta)} d\eta\\
& \leq \int_0^\infty \sqrt{\log N( \sqrt{p} B_1, \norm{\cdot}_{\ell_\infty^k(\ell_2^p)},\eta)} d\eta \\
& \lesssim  \sqrt{p} \log^{3/2} p \log^2 k\,,
\end{align*}
where the second inequality follows from Maurey's empirical method \cite{carl1985inequalities} (also see \cite[Lemma~3.4]{junge2020generalized}).
By plugging in these estimates to \eqref{eq:ubbydudley}, we obtain that
\[
\sup_{\mb z \in  B_{1,2}} \left| U_{\mb z} - \E U_{\mb z} \right| \lesssim \left(\frac{p \log^3p \log^5k + \log(\delta^{-1}) \log k}{n}\right)^{1/2}\,
\] 
holds with probability at least $1-\delta$.
\end{IEEEproof}

Note that $\mc C_1,\dots, \mc C_k$ are disjoint except on a boundary, which corresponds to a set of measure zero. 
Since the standard multivariate normal distribution is absolutely continuous relative to the Lebesgue measure, these null sets can be ignored in getting a tail bound on the infimum of the random process $(V_{\mb z})_{\mb z \in B_{1,2}}$.
Moreover, $V_{\mb z}$ is written as $V_{\mb z} = V_{\mb z}^+ + V_{\mb z}^-$, where
\[
V_{\mb z}^+ := \frac{1}{n} \sum_{i=1}^n \sum_{j=1}^k \bbone_{\mc C_j}(\mb x_i) \langle \mb x_i, \mb z_j \rangle_+\,
\]
and
\[
V_{\mb z}^- := \frac{1}{n} \sum_{i=1}^n \sum_{j=1}^k \bbone_{\mc C_j}(\mb x_i) \langle \mb x_i, - \mb z_j \rangle_+\,.
\]
Since $(V_{\mb z}^+)_{\mb z \in B_{1,2}}$ and $(V_{\mb z}^-)_{\mb z \in B_{1,2}}$ are in the form of \eqref{eq:def_Uz}, by Lemma~\ref{lem:concentration_Uz}, we obtain that
\begin{align}
\sup_{\mb z \in  B_{1,2}} \left| V_{\mb z} - \E V_{\mb z} \right| 
& \leq \sup_{\mb z \in  B_{1,2}} \left| V_{\mb z}^+ - \E V_{\mb z}^+ \right| 
+ \sup_{\mb z \in  B_{1,2}} \left| V_{\mb z}^- - \E V_{\mb z}^- \right| \nonumber \\
& \lesssim \left(\frac{p \log^3p \log^5k + \log(\delta^{-1}) \log k}{n}\right)^{1/2} \label{eq:concentration_Vz}\,
\end{align} 
holds with probability at least $1-\delta/2$.

Finally, the assertion is obtained by plugging in the above estimates to \eqref{eq:lb_Vz}.

\subsection{Proof of Lemma~\ref{lem:lhs}}
\label{sec:proof:lem:lhs}

Note that $Q_{\mb z}$ is decomposed into   
\begin{align}
&Q_{\mb z}= \frac{1}{n} \sum_{i=1}^n \sum_{j=1}^k 
\bbone_{\tilde{\mc C}_{j}\setminus \mc C_j}(\mb x_i)
\langle \mb x_i, \mb z_j \rangle\nonumber\\
&~~~~~~~~~~~~~~~~~~~~~~~+ \frac{1}{n} \sum_{i=1}^n \sum_{j=1}^k 
\bbone_{\mc C_j \setminus \tilde{\mc C}_{j}}(\mb x_i) \langle \mb x_i, - \mb z_j \rangle\,.\label{eq:decompQz}
\end{align}
Then the summands in the right-hand side of \eqref{eq:decompQz} are respectively upper-bounded by
\begin{equation*}
Q'_{\mb z} := \frac{1}{n} \sum_{i=1}^n \sum_{j=1}^k 
\bbone_{\tilde{\mc C}_{j}\setminus \mc C_j}(\mb x_i) \langle \mb x_i, \mb z_j \rangle_+\,
\end{equation*}
and
\begin{equation*}
Q''_{\mb z} := \frac{1}{n} \sum_{i=1}^n \sum_{j=1}^k 
\bbone_{\mc C_j \setminus \tilde{\mc C}_{j}}(\mb x_i) \langle \mb x_i, -\mb z_j \rangle_+\,.
\end{equation*}
We upper-bound $\sup_{\mb z \in  B_{1,2}} Q'_{\mb z}$ and $\sup_{\mb z \in  B_{1,2}} Q''_{\mb z}$ to get an upper bound on $\sup_{\mb z \in  B_{1,2}} Q_{\mb z}$ through \eqref{eq:decompQz} by the triangle inequality. 
Specifically, we show that there exists an absolute constant $c > 0$ such that
\begin{align}
\sup_{\norm{\mb z}_{1,2}=1} Q'_{\mb z}&\leq 
\sup_{j \in [k], \mb w \in \mbb S^{p-1}} \E \bbone_{\tilde{\mc C}_{j}\setminus \mc C_j}(\mb g) \langle \mb g, \mb w \rangle_+
\nonumber\\
&+ c\left(\frac{p \log^3p \log^5k + \log(\delta^{-1}) \log k}{n}\right)^{1/2}\,
\label{eq:upb1statmnt1}
\end{align}
and
\begin{align*}
\sup_{\norm{\mb z}_{1,2} = 1} Q''_{\mb z} 
&\leq 
\sup_{j \in [k], \mb w \in \mbb S^{p-1}} \E \bbone_{\mc C_j \setminus \tilde{\mc C}_{j}}(\mb g) \langle \mb g, \mb w \rangle_+ \nonumber\\
&+ c\left(\frac{p \log^3p \log^5k + \log(\delta^{-1}) \log k}{n}\right)^{1/2}\,
\end{align*}
hold simultaneously with probability at least $1-\delta/2$.

Due to the symmetry, it suffices to show that \eqref{eq:upb1statmnt1} holds with probability $1-\delta/4$. 
By the triangle inequality, it follows that
\[
\sup_{\norm{\mb z}_{1,2} = 1} Q'_{\mb z} 
\leq \sup_{\norm{\mb z}_{1,2}=1} \E Q'_{\mb z} + \sup_{\mb z \in B_{1,2}} |Q'_{\mb z} - \E Q'_{\mb z}|\,.
\]
Then, similar to Lemma~\ref{lem:rhs}, we derive \eqref{eq:upb1statmnt1} through the concentration of the maximum deviation, that is, $\sup_{\mb z \in  B_{1,2}} |Q'_{\mb z} - \E Q'_{\mb z}|$, and an upper bound on $\sup_{\mb z \in  B_{1,2}} \E Q'_{\mb z}$. 
The supremum of the expectation is upper-bounded as
\begin{align*}
\E Q'_{\mb z} &= \E \sum_{j=1}^k \bbone_{\tilde{\mc C}_{j}\setminus \mc C_j}(\mb g) \langle \mb g, \mb z_j \rangle_+ \\
&\leq \max_{j \in [k], \mb w \in \mbb S^{p-1}} \E \bbone_{\tilde{\mc C}_{j}\setminus \mc C_j}(\mb g) \langle \mb g, \mb w \rangle_+ \sum_{j=1}^k \norm{\mb z_j}_2\,.
\end{align*}

Moreover, since $\tilde{\mc C}_1, \dots, \tilde{\mc C}_k$ are disjoint (except on a set of measure zero), by Lemma~\ref{lem:concentration_Uz}, we obtain that 
\begin{equation}
\label{eq:concentration2}
\sup_{\mb z \in  B_{1,2}} \left| Q'_{\mb z} - \E Q'_{\mb z} \right| 
\lesssim \left(\frac{p \log^3p \log^5k + \log(\delta^{-1}) \log k}{n}\right)^{1/2}\,
\end{equation} 
holds with probability at least $1-\delta/4$.
This provides the assertion in \eqref{eq:upb1statmnt1}.

\section*{Acknowledgements}
{The authors appreciate anonymous reviewers for their insightful feedback and constructive suggestions that helped significantly improve the technical contributions and presentation of the manuscript.} 
S.K. and K.L. were supported in part by NSF CAREER award CCF 19-43201. S.B. was supported in part by Semiconductor Research Corporation (SRC) and DARPA.
\bibliographystyle{IEEEtran}
\bibliography{ref}

%


\begin{IEEEbiographynophoto}{Seonho Kim}
    received the B.S. and M.S. degrees
    in electrical and computer engineering from Ajou
    University, Suwon, South Korea, in 2017 and 2019,
    respectively. He is currently pursuing the Ph.D.
    degree in electrical and computer engineering with
    The Ohio State University, Columbus, OH, USA.
    His main research interests include the areas of
    optimization, machine learning, and data science.
\end{IEEEbiographynophoto}

\begin{IEEEbiographynophoto}{Sohail Bahmani}
    is a researcher broadly interested in the areas of statistics and optimization, and their interplay. He has made contributions in areas such as nonlinear regression, robust estimation, high-dimensional statistics, and inverse problems. He has received a best paper award from the AIStats conference in 2017 for his work on the phase retrieval problem. He was a postdoctoral researcher in the School of Electrical and Computer Engineering at Georgia Tech from 2013 to 2022, and received his PhD in Electrical and Computer Engineering from Carnegie Mellon University in 2013.
\end{IEEEbiographynophoto}

\begin{IEEEbiographynophoto}{Kiryung Lee}
    (Senior Member 2019, IEEE) received the B.S. and the M.S. degrees in electrical engineering from Seoul National University, Seoul, South Korea, in 2000 and 2002, and the Ph.D. in electrical and computer engineering from the University of Illinois at Urbana-Champaign, Urbana, IL, USA. Since 2018, he has been an Assistant Professor with the Electrical and Computer Engineering Department, Ohio State University, Columbus. His research focuses on developing mathematical theory and optimization algorithms for inverse problems in signal processing, imaging, machine learning, statistics, and data science. He is a recipient of the NSF CAREER Award. 
\end{IEEEbiographynophoto}






\end{document}